\newtheorem{theorem}{Theorem}
\newtheorem{proposition}[theorem]{Proposition}%
\title{Neuronal Group Communication for 
Efficient Neural representation}
\author{ 
{\includegraphics[scale=0.06]{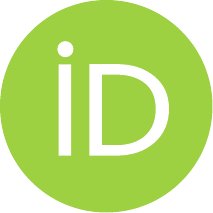}\hspace{1mm}Zhengqi Pei}\\
	Institute of Computing Technology, 
    Chinese Academy of Sciences.\\
    School of Computer Science and Technology,
    University of Chinese Academy of Sciences.\\
	Beijing, China \\
	\texttt{peizhengqi25b@ict.ac.cn} \\
    \And
    {\includegraphics[scale=0.06]{orcid.pdf}\hspace{1mm}Qingming Huang} \\
	School of Computer Science and Technology\\
	University of Chinese Academy of Sciences\\
	Beijing, China\\
	\texttt{qmhuang@ucas.ac.cn} \\
	\And
    {\includegraphics[scale=0.06]{orcid.pdf}\hspace{1mm}Shuhui Wang}\thanks{Correspondence author} \\
	Institute of Computing Technology\\
	Chinese Academy of Sciences\\
	Beijing, China \\
	\texttt{wangshuhui@ict.ac.cn} \\
}
\begin{document}

\maketitle

\begin{abstract}

The ever-increasing scale of modern neural networks has brought unprecedented performance alongside daunting challenges in efficiency and interpretability.
This paper addresses the core question of how to build large neural systems that learn efficient, modular, and interpretable representations.
We propose \textbf{Neuronal Group Communication} (NGC), a theory-driven framework that reimagines a neural network as a dynamical system of interacting neuronal groups rather than a monolithic collection of neural weights.
Instead of treating each weight as an independent trainable parameter, NGC treats weights as transient interactions between embedding-like neuronal states, with neural computation unfolding through iterative communication among groups of neurons.
This low-rank, modular representation yields compact models: 
groups of neurons exchange low-dimensional signals, enabling intra-group specialization and inter-group information sharing while dramatically reducing redundant parameters.
By drawing on dynamical systems theory, we introduce a neuronal stability metric (analogous to Lyapunov stability) that quantifies the contraction of neuron activations toward stable patterns during sequence processing.
Using this metric, we reveal that emergent reasoning capabilities correspond to an external driving force or ``potential'', which nudges the neural dynamics away from trivial trajectories while preserving stability.
Empirically, we instantiate NGC in large language models (LLMs) and demonstrate improved performance on complex reasoning benchmarks under moderate compression.
NGC consistently outperforms standard low-rank approximations and cross-layer basis-sharing methods at comparable compression rates. 
We conclude by discussing the broader implications of NGC, including how structured neuronal group dynamics might relate to generalization in high-dimensional learning systems.

\end{abstract}

\section{Introduction}

Deep neural networks have achieved remarkable success across domains, but this progress has come at the cost of massive model scales.
State-of-the-art models often contain tens or hundreds of billions of parameters~\citep{zhao2023survey,naveed2025comprehensive},
making them resource-intensive to train and deploy and challenging to interpret.
This reality has spurred extensive research on model compression and efficient representations~\citep{wang2024svd,wangbasis}.
A range of techniques have been explored to reduce the size of the model while preserving performance, {\it e.g.}, weight pruning~\citep{liu2018rethinking}, quantization~\citep{xu2018deep, nagel2021white}, knowledge distillation~\citep{hinton2015distilling}, and low-rank approximation~\citep{yuan2023asvd,wang2024svd}.
In particular, factoring weight matrices via singular value decomposition (SVD) or related methods can compress models with minimal accuracy loss. 
For example, recent SVD-based compression methods such as SVD-LLM~\citep{wang2024svd} and Dobi-SVD~\citep{qinsidobi} achieve state-of-the-art results by truncating small singular values and fine-tuning factorized weights.
Similarly, cross-layer low-rank basis sharing has been explored to further reduce redundancy by using shared basis vectors across multiple layers~\citep{wangbasis}. 
These advances underscore a broader insight: 
much of an LLM’s parameter space is redundant, and a large portion of model behavior can be captured in a far lower-dimensional subspace.

Parallel to these engineering efforts, researchers are also investigating the internal organization and dynamics of large neural models.
There is growing evidence that such models are not homogeneous ``black boxes'', but exhibit an internal modular structure with specialized groups of neurons~\citep{arbib1998neural,meunier2010modular,perin2011synaptic,xiao2024configurable}.
For example, studies of Transformer networks~\citep{vaswani2017attention} have found that only a small fraction of neurons in feed-forward layers are significantly activated for any given input, indicating sparse and specialized functionality~\citep{qiu2024unlocking}. 
Moreover, neurons with similar functions tend to co-activate, effectively forming functional groups or clusters~\citep{yuste1995neuronal,eickhoff2011co,zhang2023emergent}. 
This phenomenon, often termed ``emergent modularity'', reflects how modular organization can spontaneously arise even in standard (non-explicitly modular) networks~\citep{bullmore2009complex,meunier2010modular}.
In other words, large-scale models appear internally organized into neuronal communities that specialize in particular aspects of the input or subtasks of the problem. 
Such observations resonate with long-standing concepts in neuroscience and cognitive science that complex systems achieve efficiency through modularity and sparse communication between subcomponents~\citep{goyal2019recurrent,dobs2022brain}. 
They also align with the success of explicit modular architectures like Mixture-of-Experts, which allocate distinct ``expert'' sub-networks to different inputs~\citep{gururangan2022demix,fedus2022switch}. 
Such modular or group-wise structures confer benefits in adaptability, interpretability, and robustness~\citep{qiu2024unlocking}.

Our work is motivated by uniting these two perspectives: 
the need for efficient neural representations in large models and the reality of emergent neuronal modularity. 
This leads to a fundamental question: 
\textbf{Can we design large neural systems that are both more efficient in parameter usage and more interpretable in their internal computation by leveraging a modular, group-based organization?}
In particular, rather than structuring models around traditional layer-wise weight matrices, 
can we reorganize them around their \emph{emergent neuronal groups}? 
Furthermore, can principles from dynamical systems theory help us analyze and guide such a reorganization? 
Addressing these questions requires rethinking the basic building blocks of neural computation in high-dimensional models.
To this end, we propose \textbf{Neuronal Group Communication} (NGC) as a principled framework for efficient and dynamic neural representation.
NGC reframes a neural network as a collection of interacting neuronal groups, rather than a monolithic weight matrix.
A \emph{neuronal group} denotes a set of neurons that are strongly interconnected (or functionally related), forming a small community within the network.
Communication between neurons, which is ordinarily encoded implicitly in large weight matrices, is made explicit in NGC as dynamic information interactions between these neuronal groups.
In other words, rather than every connection being a fixed scalar weight, we view these connections as interactions mediated by shared neuronal states.
Each neuronal group can have rich internal interactions called \emph{intra-group} communication and also exchange information with other neuronal groups via \emph{inter-group} communication.
Notably, inter-group communication in NGC can represent a \emph{virtual information flow} that does not require a direct pathway in the original static neural structure;
this intuition is supported by system neuroscience, where functional interactions often transcend anatomical connectivity~\citep{kriegeskorte2008representational,friston2011functional,semedo2019cortical}.

Consequently, NGC shifts the learning focus from neural weights to neurons themselves by treating each neuron (or each small neuron community) as a trainable unit with an embedding-like intrinsic state.
In conventional deep learning, training adjusts the many weight parameters that connect neurons. 
In NGC, what we aim to learn and preserve are the intrinsic latent states of individual neurons, while the weight values that couple them are considered transient and are derived from those states.
This perspective is analogous to principles in communication networks: 
the Internet favors simple, flexible routing (``best-effort'' delivery) and pushes complexity to the end hosts, rather than hard-coding every connection~\citep{ishida1993internetworking,clark1998explicit,hunt2002review}. 
Therefore, a neural model might operate more efficiently if it does not overly commit to a fixed pattern of inter-neuron communication (specific weight values), but instead maintains adaptable neuron representations that can communicate as needed.

In practice, NGC implements this idea through low-rank factorization of weight interactions.
Any weight matrix ${\bf W}\in\mathbb{R}^{m\times n}$ can be approximately decomposed as ${\bf W} \approx {\bf A}{\bf B}^{\top}$, with ${\bf A}\in\mathbb{R}^{m\times r}$ and ${\bf B}\in\mathbb{R}^{n\times r}$ for some small $r$. 
More concretely, any entry ${\bf W}[i,j]$ can be approximated using an interactive metric $\mu$ between ${\bf A}[i]$ and ${\bf B}[j]$ without loss, {\it i.e.}, ${\bf W}[i,j]=\mu({\bf A}[i],{\bf B}[j])$ for any $i,j$.
We interpret the rows of ${\bf A}$ and ${\bf B}$ as neuronal states for the output and input neurons of ${\bf W}$, respectively. 
These $r$-dimensional state embeddings characterize neurons in a learned latent space (analogous to word embeddings in NLP). 
The original weight ${\bf W}_{ij}$ then corresponds to a communication interaction, which can be accessed through the dot product or MLP, between the state of the neuron $i$ and the neuron $j$. 
By learning the low-dimensional states ${\bf A}[i], {\bf B}[j]$ rather than each ${\bf W}_{ij}$ individually, we drastically reduce the number of free parameters and enable parameter sharing.
If two different weight matrices in different layers connect similar groups of neurons, those groups can be represented by the same or overlapping state vectors, and only the ``interaction pattern'' (a lightweight transformation) needs to differ. 
Indeed, recent work on the sharing of bases between layers in transformers shows that using a common set of basis vectors for the decomposed weights of multiple layers produces better compression than treating each layer independently~\citep{wangbasis}. 
These results support the feasibility of 
\emph{shifting our optimization target from massive individual weight values to a much smaller set of neuronal states}.

Adopting this NGC reparameterization not only compresses the model but also provides a new dynamical systems perspective on neural computation.
By interpreting the forward pass as a time-variant neuronal state evolution, we can apply tools from dynamical systems theory to analyze and improve the network's behavior.
In particular, we derive a metric called neuronal stability, drawing on Lyapunov’s stability criterion and input-to-state stability concepts, to quantify the robustness of internal neuronal dynamics.
Intuitively, the neuronal stability score measures how strongly the activations of neurons tend to contract toward stable patterns as the model processes each token or step of input.
A better stability score indicates that small perturbations or deviations in neuron activations die out over the course of processing, leading to smooth and consistent internal trajectories. 
By viewing neuron interactions through this lens of time-varying state evolution, we can formulate training objectives that encourage stable, coherent trajectories of activations during inference. 
This approach offers a novel form of regularization and fine-tuning, distinct from conventional weight decay or other layer-wise regularization techniques, by directly shaping the dynamical behavior of the neural state space.


Crucially, our analysis uncovers an intriguing link between stability and the network's reasoning capabilities. 
One might expect that making a system more stable (contractive in its dynamics) would render its behavior more predictable. 
However, we find the opposite in certain regimes: 
better neuronal stability often coincides with less predictable global behavior in the NGC-based model, and this correlates with improved reasoning performance on challenging tasks.
In essence, when local neural interactions are stabilized and made more efficient, the model as a whole can afford to explore more non-linear, unexpected trajectories in its activation space. 
We interpret this finding as evidence that emergent reasoning arises from a delicate interplay between stability and a controlled departure from linearity. 
In our framework, an external ``push'' or potential can drive the neural system away from its routine, allowing it to escape facile or biased responses, while the underlying stability ensures these excursions do not lead to chaotic divergence. 
This view suggests that a capacity for reasoning may correspond to the network being held in a poised state: 
stable enough to be reliable, yet with an external drive that injects the surprise or creativity needed to go beyond routine patterns.

While NGC is a general framework applicable to any large neural network, in this paper, we focus on Large Language Models (LLMs) as an important instantiation.
LLMs represent some of the most extreme cases of model scale and exhibit prominent emergent behaviors such as few-shot reasoning and step-by-step problem solving, making them an ideal testbed for our theory.
We empirically evaluate NGC on several open-source LLMs across challenging reasoning benchmarks. 
Under moderate compression ratios, NGC consistently outperforms standard low-rank approximation baselines and recent basis-sharing methods, demonstrating that we can shrink model size while enhancing performance on complex tasks. 
These results illustrate the practical merit of re-organizing neural computation around dynamic groups.
Moreover, on the theoretical side, we show that any standard neural network can be re-expressed as an NGC system composed only of intra- and inter-group communications (establishing a form of universality), and we prove that a model's reasoning ability under an NGC policy is directly connected to the dynamical stability of its information flow. 
Together, our findings indicate that NGC offers a new principled avenue for designing AI systems that are efficient in computation and rich in emergent functionality. 
We conclude by discussing open questions and future directions, including how the dynamics of neuronal groups might relate to generalization (e.g. as minimal surfaces in state-space) and how NGC principles could be applied beyond language to other domains.

To better introduce the NGC policy, we need to clarify some taxonomy on the NGC-based communication:
\begin{itemize}
    \item The communication \emph{type} categorizes the neuronal communications into \emph{intra-group} interactions among neurons within one neuronal group and \emph{inter-group} interactions among neuronal groups;
    \item The communication \emph{mechanism} or \emph{metric} performs the neuronal interaction for each type of communication. 
    For intra-group, this includes dot products or compact MLPs; for inter-group, this includes synchronous, instantaneous coupling, delayed, or asynchronous information exchange;
    \item The communication \emph{topology} depicts the directed graph of the neuronal groups, indicating which neuronal groups talk to whom and with what communication mechanism.
\end{itemize}
Then, an NGC policy is defined as a communication topology that applies different communication mechanisms in different types to the neural blocks of a neural structure such as LLM, forming a dynamical neural system.
In particular, we will propose and validate the following five claims that structure the rest of the paper: 
\begin{enumerate}
    \item \textbf{Universality}: 
    Neural computation in any well-formed neural structure is equivalent to an NGC process consisting only of intra-group and inter-group communications 
    (see Section~\ref{subsec: NGC-policy, intra/inter-group Com} and Theorem~\ref{app-thm: intra/inter-group coms are sufficient});
    \item \textbf{Topology-reasoning correlation}: 
    Reasoning ability of a neural system under an NGC policy correlates with dynamical stability of the induced token-time information flow 
    (see Section~\ref{subsec: dy-Neural system and neuronal stability} and Theorem~\ref{app-thm: neuronal stability measures reasoning capacity}); 
    \item \textbf{Realizability}: 
    Any inter-group communication, including delayed or asynchronous ones, is equivalent to instantaneous sharing of input-side neuronal states 
    (see Section~\ref{subsec: initialization and training} and Theorem~\ref{app-thm: shared input neurons simulate any NGC});
    \item \textbf{Emergent reasoning}: 
    Emergent reasoning correlates with an external potential that pushes trajectories away from routine linear behavior while preserving stability 
    (see Section~\ref{subsec-discuss: emergent reasoning} and Theorem~\ref{app-thm: emergent potential equals non-predictability}).
    \item \textbf{Metric symmetry}: 
    A well-formed intra-group communication mechanism that improves parameter efficiency is a bilinear symmetric Riemannian metric (see Section~\ref{subsec-exp: interact-Metric and ablation} and Proposition~\ref{app-prop: intra-group metric is riemannian}); 
\end{enumerate}

\begin{figure*}[h]
\centering
    \includegraphics[width=6.5in]{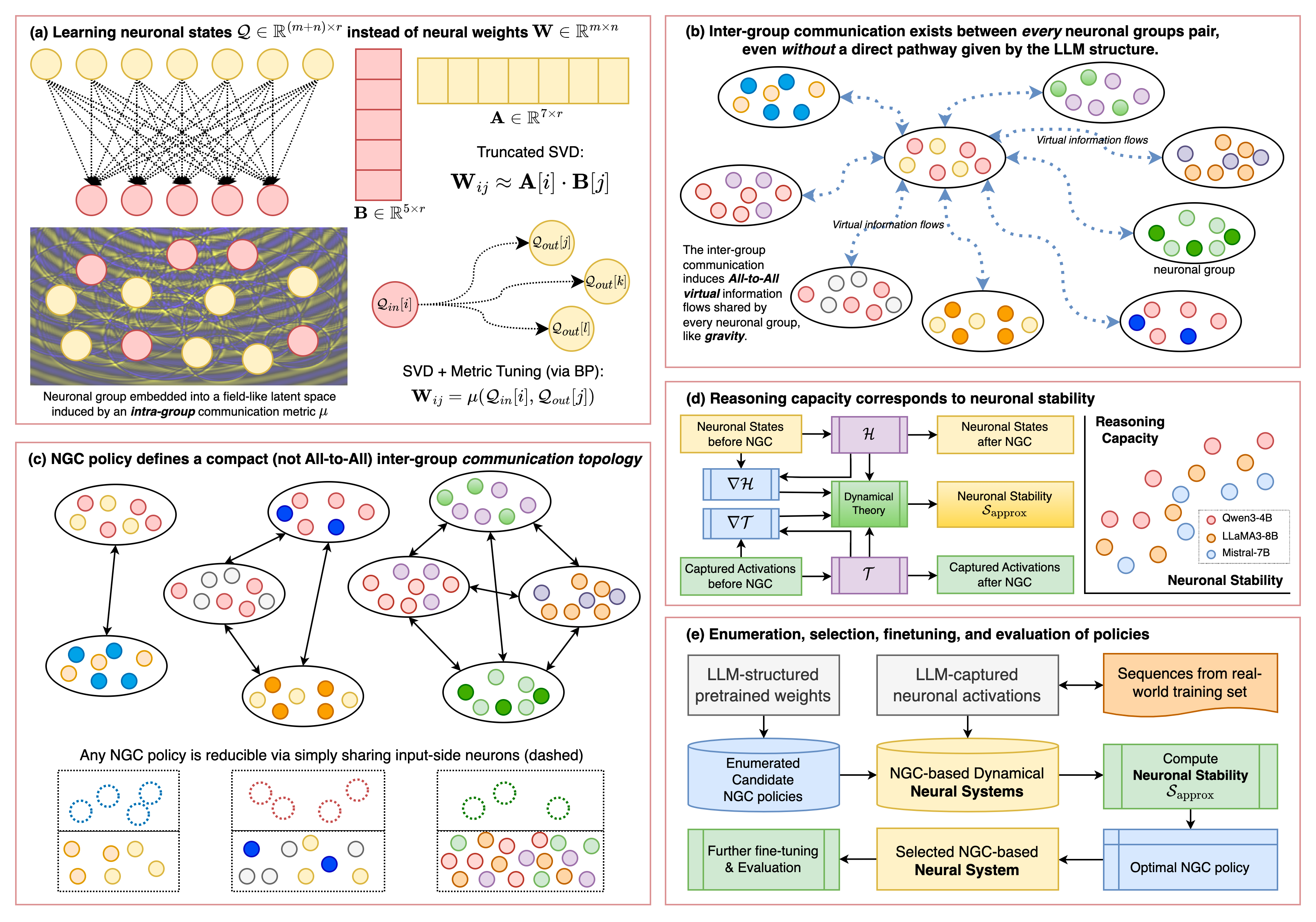}%
    \caption{
\textbf{An overview of the Neuronal Group Communication (NGC) pipeline.}
\textbf{(a) Neural weights as intra-group dynamical interaction.} 
We reinterpret neural weights as \emph{intra-group communication}, which are compact bilinear maps (optionally metric-symmetric) that define local dynamics over neuronal states, casting the model as a dynamical system (Section~\ref{subsec: learning neurons not weights}).
\textbf{(b) Virtual inter-group communication exists everywhere.} 
Groups exchange \emph{virtual information} through the alignment of their state subspaces; 
even without a direct pathway, a low-dimensional \emph{communication subspace} exists. 
(Section~\ref{subsec: NGC-policy, intra/inter-group Com})
\textbf{(c) NGC policy controls communication.} 
An NGC policy determines which neuronal groups share synchronous/asynchronous information exchange (Section~\ref{subsec: dy-Neural system and neuronal stability}); 
Any synchronous/asynchronous inter-group NGC among neuronal groups is equivalent to partially aligning their input-side neuronal states (Theorem~\ref{app-thm: shared input neurons simulate any NGC}).
\textbf{(d) Reasoning corresponds to neuronal stability.} 
Better reasoning is associated with reduced Lyapunov-like scores $\mathcal{S}$ (and surrogate $\mathcal{S}_{\text{approx}}$) under contracting dynamics (Theorem~\ref{app-thm: neuronal stability measures reasoning capacity}); 
external potentials $\nabla U$ produce controlled departures that remain stable from input-to-state (Theorems~\ref{app-thm: emergent potential equals non-predictability}).
\textbf{(e) Enumeration, selection, finetuning, and evaluation of policies.} 
We instantiate a bank of NGC policies from pre-captured activations, select by neuronal stability (low $S$ or $S_{\text{approx}}$), then finetune and evaluate on reasoning benchmarks (Section~\ref{subsec: initialization and training}).
    }
    \label{fig: overall-pipeline}
\end{figure*}



\clearpage
\section{Background and Motivation}

\subsection{Neural Model Compression and Low-Rank Representations}

Compression of large neural networks has a rich history, with approaches ranging from weight pruning and quantization to knowledge distillation and tensor factorization. 
For LLMs, post-training compression (compressing a pre-trained model without retraining from scratch) is particularly attractive. 
Among these methods, low-rank approximation has emerged as a powerful technique, given that transformer weight matrices often have rapidly decaying spectra (many small singular values). 
\citet{wang2024svd} pioneered a truncation-aware SVD pipeline for LLMs, introducing data-driven criteria to select which singular values to keep and fine-tuning compressed weights to compensate for lost information. 
It outperforms earlier layerwise SVD truncation methods, especially at high compression ratios (aggressively low rank). 
Based on this, SVD-LLM V2~\citep{wang2025svd} and Dobi-SVD~\citep{qinsidobi} incorporate activation awareness, recognizing that not all directions in weight space are equally important given the typical activations in the network~\citep{hao2024stabilized,huang2025sola}. 
For example, Activation-aware SVD (ASVD) scales weight rows/columns by the average activation magnitude in each channel before factorizing~\citep{yuan2023asvd}, which better preserves important features and yields additional memory savings (including compressing the attention KV cache during inference). 
These methods illustrate a key point: 
weight compression can benefit from incorporating inference dynamics (how activations use the weights) rather than purely static matrix approximation. 

Our NGC framework takes this philosophy further by explicitly tying compression to a network dynamics model. 
Instead of compressing each weight in isolation, we restructure the network into neuronal states that can be efficiently updated and shared.
Moreover, parameter sharing across layers is another promising avenue to eliminate redundancy. 
This idea was exemplified in Basis Sharing~\citep{wangbasis}, which factorizes the weight matrices of multiple layers in a common set of basis vectors. 
For instance, all attention projection matrices might share a basis in their low-rank decomposition, differing only in the combination coefficients per layer. 
By doing so, \citet{wangbasis} achieves lower perplexity (better performance) than independent SVD compression, under the same memory budget. 
This indicates that many features extracted by earlier layers are re-used in later layers, precisely the kind of redundancy NGC aims to capture through global neuronal state embeddings. 
In fact, if a set of neurons plays a similar role in multiple neural layers of neuronal groups, NGC would naturally represent them with one group state (or highly aligned states) that persists across those neural blocks, rather than duplicating parameters for them.

Beyond SVD, it is worth noting that knowledge distillation~\citep{hinton2015distilling} and pruning~\citep{liu2018rethinking} also implicitly assume some compressible structure in networks. 
In addition, studies of lottery ticket hypothesis~\cite{frankle2018lottery} have suggested that large networks contain smaller sub-networks that can be trained to full performance if isolated and reset~\citep{csordas2021neural}. 
These insights resonate with neuronal groups: 
a pruned sub-network could correspond to activating only certain neuron communities necessary for the tasks. 
However, naive pruning or compression can harm model stability and capacity, especially for LLMs where subtle interactions matter. 
NGC provides a structured alternative: 
we do ``pruning'' at the level of weight singular values (low-rank truncation) and ``distillation'' at the level of neuronal states, all within a unified dynamical system viewpoint.

In summary, model compression research indicates that many parameters in LLMs are wasteful or repeated, and that aligning compressions with network usage (activations, multi-layer reuse) is crucial. 
NGC is intrinsically aligned with this: 
by focusing on embedding-like neuronal states within a neuronal group (see Figure~\ref{fig: overall-pipeline}), we eliminate redundant degrees of freedom (since many weights correspond to the same underlying neuron interactions) and ensure that what is preserved (the neuronal states) are exactly those components that actively participate in representing knowledge.

\subsection{LLMs as Dynamical Systems: Representation and Stability}

The interpretation of recurrent neural networks as dynamical systems is well-established in deep learning theory~\citep{funahashi1993approximation,draye1996dynamic,chang2019antisymmetricrnn}, and even feed-forward networks can be analyzed in this light by considering the evolution of internal activations across layers or time steps~\citep{narendra1992neural,popescu2009multilayer,pei2023dynamics}. 
Transformers are not recurrent in the traditional sense, but when processing a sequence of tokens, token-wise activations can be viewed as a dynamical trajectory driven by the network’s layers~\citep{geneva2022transformers,fein2025flowing}.
Moreover, with approaches such as deep equilibrium models~\citep{bai2019deep} or sequence-to-sequence models~\citep{sutskever2014sequence}, 
researchers have drawn parallels between Transformers and iterative dynamical systems that converge to fixed points or cycle through states~\citep{lu2019understanding,hu2024state}. 
In our NGC framework, we explicitly construct a dynamical system by allowing neuron states to evolve over an artificial time parameter (distinct from the token index). 
This is analogous to introducing recurrence at the neuron level, superimposed on the forward pass of the network.

The reason we consider a dynamics-inspired perspective is because we need powerful analytical tools such as dynamical stability theory~\citep{bhatia2002stability,bhatia2006dynamical}. 
We hypothesize that \emph{an LLM’s capability for systematic reasoning and generalization is related to the stability of its internal state dynamics as it processes inputs}. 
Intuitively, if the effect of a new token on the hidden state decays or contracts in some metric ({\it i.e.}, small perturbations do not explode through the network), the model might exhibit more reliable multi-step reasoning, avoiding chaotic behavior. 
This intuition is supported by findings in sequence modeling: 
for example, certain state-space models (SSMs) were designed with provable stability (using diagonal plus low-rank state matrices) and have shown robust performance on long sequences~\citep{hamilton1994state,aoki2013state,gu2021efficiently,gu2023mamba}. 
In Transformers, recent work has drawn connections between the spectrum of weight matrices (or Jacobians of network mapping) and the tendency of the model to be stable or chaotic~\citep{arroyo2025vanishing}. 
Techniques like orthogonal initialization or spectral normalization can enforce a form of contractivity, which link to smoother optimization and better generalization in neural networks~\citep{csordas2021neural}.

In NGC, we formalize a notion of neuronal stability by deriving a Lyapunov-like function that aggregates the deviations between the ``root'' (original model without inter-group communication) and ``com'' (after-NGC model during inter-group communication). 
As presented in Eq.~\ref{eq: define-grad-T}, we track how much neuronal activations $\mathcal{A}(t)$ in the NGC model differ from those of the original model and how much neuronal states $\mathcal{Q}(\tau)$ change during communication. 
We also define residual terms for transitions and communications (see Eq.~\ref{eq: define-act-residual}-~\ref{eq: define-weight-residual}), 
and then a combined stability score $\mathcal{S}$ that essentially penalizes large residuals or expansive non-contracting mappings (see Eq.~\ref{eq: define-neuronal-stability-score}). 
We show an important result: 
under mild conditions, a lower stability score (meaning the NGC dynamics is more contractive and closely shadows the original model’s step-by-step behavior) correlates with higher LLM reasoning accuracy, especially on tasks that require multi-step deduction. 
This aligns with the notion of Input-to-State Stability (ISS) in control theory, where a system is stable if bounded input perturbations cause only bounded (ideally diminishing) deviations in state~\citep{sontag1995characterizations,jiang2001input}. 
In the  context of LLMs, the ``input'' is each new token, and the ``state'' is the hidden representation; 
a stable system would process each token in a way that does not derail the overall encoding of the sequence. 
We will later show empirical evidence: 
configurations of NGC that yield higher stability scores (less stable) tend to perform worse on long-context tasks, 
whereas more stable configurations (achieved by appropriate grouping and state-sharing) yield better performance.

Another motivation inspired by dynamics is to incorporate real-time activation information into the model representation. 
Traditional pruning or low-rank compression operates on weights offline. 
However, the actual weight utilization of a network during inference may vary depending on the inference sample. 
In NGC, by formulating communication states that depend on both weights and activations (see Eq.~\ref{eq: C_ij via SVD}), we essentially perform a form of online low-rank adaptation: 
we identify which directions in weight space are truly used by the given input distribution (through observed activations) and prioritize those in constructing the neuronal states. 
This is analogous to continuously learning a reduced model of the Jacobian or Hessian network. 
Recent work~\citep{chen2021drone,yuan2023asvd, wang2024svd, lin2024awq, wang2025svd} attempted ``activation clustering'' or joint weight-activation factorization to compress models, but those often treated each layer separately and could not capture global dynamics. 
Our approach, on the contrary, ties all layers together via the shared neuronal state space and the global dynamical update rules.

In summary, treating LLM computations as a dynamical system of neurons provides theoretical tools, {\it e.g.}, Lyapunov stability analysis, and practical mechanisms, {\it e.g.}, activation-informed learning, to interpret neural models. 
It moves us beyond viewing compression as a purely numerical approximation problem and toward viewing it as designing an optimal dynamical system that emulates the original complex one.

\subsection{Neuronal Groups and Modularity in Neural Networks}
The concept of grouping neurons into functional modules has deep roots in both artificial neural networks and neuroscience. 
In the brain, neuronal assemblies or cortical columns can be seen as groups of neurons that strongly interact among themselves and carry out specific functions or represent particular concepts~\citep{gerstein1989neuronal,mountcastle1997columnar,shipp2007structure}. 
Such grouping is believed to confer robustness and efficiency, each group can compute relatively independently and communicate succinct high-level information to other groups~\citep{pei2024modeling}.
In artificial networks, Modular Neural Networks have long been studied as a way to improve the learning of multiple skills or reduce interference between tasks~\citep{auda1999modular,qiu2024unlocking}. 
A classic example is the Mixture-of-Experts (MoE) architecture~\citep{jacobs1991adaptive,fedus2022switch}, where a gating mechanism activates one or a few expert sub-networks, {\it i.e.}, a neuronal group, for a given input. 
MoEs in large-scale transformers have achieved state-of-the-art results in both language and vision by expanding the model capacity while keeping the computation per token constant~\citep{shen2023moduleformer}.

However, most LLMs today are trained as ``monoliths'', without an explicit modular structure. 
As discussed earlier, evidence of implicit modularity has been found.
\citet{li2022large} shows that in a pretrained T5 model~\citep{raffel2020exploring}, only 3 to 6\% of neurons fire per input, 
and those that fire can be clustered by the type of stimulus (task or semantic feature) that activates them. 
Another line of work observes that certain neurons in LLMs correspond to specific concepts, and sets of neurons can be jointly linked to high-level functionalities~\citep{suau2020finding, dai2022knowledge}. 
When these neurons are ablated or their activations are perturbed, only specific capabilities are lost, implying that they constitute a semi-independent module~\citep{zhang2023emergent}. 
There is also research showing that during training, networks first learn coarse, high-level partitions of functionality (rough modular structure) and then refine the details~\citep{pfeiffer2023modular}. 
In other words, modularity emerges early in training, suggesting that it is a natural solution to organize complex tasks.

Our NGC approach explicitly embraces this principle of neuronal modularity. 
We define a neuronal group (or community) as a subset of neurons with dense internal connections but sparse external connections. 
This definition can be operationalized by analyzing the weight matrix: 
a perfect modular structure would correspond to a block-diagonal weight matrix (each block being internal connections) with few off-diagonal entries (external connections). 
Real networks are not so neatly blocked, but low-rank structure in weights can be seen as a relaxation: 
if weight ${\bf W}$ is low-rank $={\bf A}{\bf B}^{\top}$, then each neuron in the ``output'' of ${\bf W}$ is effectively connected to the ``input'' neurons through a low-dimensional subspace (spanned by columns of ${\bf B}$), which is a kind of bottleneck or interface. 
Neurons sharing similar ${\bf B}$ projections (or similar ${\bf A}$ rows) are effectively part of a community with shared communication channels identified by performing SVD on weights. 
Moreover, by allowing different weight matrices to share neuronal states, we merge the neuronal groups, which are artificially separate due to structural boundaries but in truth serve a similar role. 
It reveals that many neurons across different layers might be doing duplicate work because the architecture does not allow them to be unified. 
Our Proposition~\ref{app-prop: overlapping neurons} formalizes that if two sets of neurons in different parts of the network have highly overlapping communication patterns, then aligning them into one group does not affect the model’s computations.

Based on this insight on neuronal groups, we then turn into a framework where each neuronal group can conduct inter-group communication with any other neuronal groups, even there is no physical pathway between them.
For instance, in a given LLM, though a neural block ``q-projection'' in a neural layer $\ell$ has no direct pathway with a neural block ``k-projection'' in another neural layer $\ell+10$, they can still maintain a ``weak'' but solid inter-group communication that share ``virtual'' information flow between them. 
As in systems neuroscience, analysts distinguish structural connectivity (physical wiring) from functional effective connectivity~\citep{friston2011functional}. 
The inter-group communication aligns with the latter: 
an inferred interaction that need not coincide with a direct pathway.
Empirically, inter-area communication often concentrates in a low-dimensional communication subspace, 
exactly the ``few channels'' intuition we propose; 
this motivates defining flow via subspace alignment rather than literal message passing~\citep{semedo2019cortical}.
Moreover, using the geometry of representations to index ``who can talk to whom'' echoes representational similarity analysis and related model, comparison frameworks that treat proximity in representation space as functionally meaningful~\citep{kriegeskorte2008representational}.
Interpreting the learned metric on neuronal states through information geometry provides a principled Riemannian view of these proximities and geodesic notions~\citep{amari2016information,pei2024data}. 
The idea that proximity induces the notion of diffusion/flow in the manifold is consistent with diffusion geometry, a useful tool to discuss geodesic closeness and flow fields in state space~\citep{coifman2006diffusion}.

It is also illustrative to draw an analogy: \emph{Neuronal groups are societies}. 
In a human organization, small teams specialize in certain tasks and communicate with other teams via well-defined interfaces. 
An effective organization minimizes unnecessary communication (which creates noise and overhead) and delegates decisions within teams for their specialty areas. 
Likewise, NGC treats each neuronal group as a mini-agent that processes part of the signal and then communicates succinctly (through the low-dimensional state) to other neuronal groups. 
The neural weight matrix is analogous to a complete communication log, it records what each neuron would say to every other neuron if it had to. 
To avoid redundancy, the NGC tries to uncover the protocol of communication in a compressed form through shared representation. 
This view resonates with the best-effort communication principle of the Internet: 
do not enforce heavy, nonflexible communication constraints; 
allow each endpoint (neuronal group) to adapt and ensure the important information gets through efficiently. 

In summary, neuronal modularity is a real phenomenon in large networks, and exploiting it can yield efficiency gains. 
NGC is designed to provide a systematic way to surface and leverage this modularity by restructuring models around neuron groups and their communications. 
In the next section, we formally describe the NGC framework, including how we derive the low-rank neuronal states, how intra-group and inter-group communications are modeled, and how we train and evaluate the resulting neural system.

\clearpage
\section{The NGC framework and Methodology}

\subsection{Learning Neuronal States Instead of Neural Weights}
\label{subsec: learning neurons not weights}

In NGC, the fundamental trainable entities are the embedding-like neuronal states rather than individual weight parameters. 
We obtain these state vectors by factoring the low-rank weight matrices of the original model. 
Concretely, consider a linear mapping $\mathbf{y} = {\bf W} \mathbf{x}$, where ${\bf W}\in \mathbb{R}^{m\times n}$ is the neural weights built from $m+n$ neurons, $\mathbf{x}\in \mathbb{R}^n$ is the input activation vector (from $n$ input neurons) and $\mathbf{y}\in \mathbb{R}^m$ is the output activation (for $m$ output neurons). 
We can divide the neurons involved into two groups: 
$G_{\text{in}}$ for the $n$ input neurons and $G_{\text{out}}$ for the $m$ output neurons. 
The weight matrix ${\bf W}$ defines the communication from each neuron $j\in G_{\text{in}}$ to each neuron $i\in G_{\text{out}}$. 
Now suppose ${\bf W}$ has rank approximately $r$ (often much smaller than $m$ or $n$). 
Then there exist matrices ${\bf A}\in\mathbb{R}^{m\times r}$ and ${\bf B}\in\mathbb{R}^{n\times r}$ such that ${\bf W} \approx {\bf A}{\bf B}^{\top}$. 
We interpret ${\bf A}$ as containing $r$-dimensional state vectors for the output neurons (each row ${\bf A}[i]$ is the neuronal state of the neuron $i\in G_{\text{out}}$), and ${\bf B}$ as containing $r$-dimensional state vectors for the input neurons (each row ${\bf B}[j]$ is the neuronal state of the neuron $j\in G_{\text{in}}$). 
The weight interaction is then ${\bf W}_{ij} \approx {\bf A}[i]\cdot {\bf B}[j]^{\top}$, a dot product between the state of neuron $i$’ and the state of neuron $j$’. 
In other words, the communication strength between two neurons is the inner product of their state embeddings in a $r$-dimensional latent space.
Moreover, if we use a more complex trainable nonlinear metric $\mu:
\mathbb{R}^{r}\times\mathbb{R}^{r}\mapsto\mathbb{R}$ than the dot product, we might obtain a better approximation of ${\bf W}$ than ${\bf A}{\bf B}^{\top}$ ~\citep{pei2023dynamics,shen2024expanding,pei2024data}.

This reinterpretation has many implications. 
First, it provides natural compression: 
storing ${\bf A}$ and ${\bf B}$, which have $(m+n)r$ parameters, is much cheaper than storing ${\bf W}$, which has $mn$ parameters, when $r \ll \min(m,n)$. 
All modern LLM compression results indicate that moderate to high compression is indeed possible without a severe performance drop, confirming that the effective $r$ for many weight matrices is quite small. 
Second, it shifts our perspective to the neurons themselves as carriers of information. 
Each neuron $i$ is associated with a point ${\bf A}[i]$ in $\mathbb{R}^r$ that represents its general ``position'' or role in the communication space of the network, regardless of how many other neurons it connects. 
Training the model then largely amounts to learning these points in space, such that the dot products ${\bf A}[i]\cdot {\bf B}[j]^{\top}$ produce the desired computations. 
Third, it opens the door to sharing and reusing neuronal states. 
In a standard neural model, neural blocks that involve the same neuron (or a conceptually similar neuron) would be two separate sets of parameters.
As in multi-head attention: 
each head has its own projection matrices ${\bf W}_Q, {\bf W}_K, {\bf W}_V$.
Under NGC, if two heads are essentially detecting the same query-key relationship but in parallel, we could represent that by sharing some of the query or key neuronal states across those heads. 
This reduces parameters and might even improve generalization by preventing heads from diverging into duplicate roles.

In practice, we can simply perform a truncated SVD (or any low-rank factorization technique) on the major weight matrices of a pretrained LLM (attention Q, K, V projections, output projections, MLP weights). 
The choice of rank $r$ is dictated by the desired compression ratio. 
We initialize ${\bf A}$ and ${\bf B}$ from this factorization. 
However, many LLM weights are very large, so computing an exact SVD can be costly. 
Instead, one can use random projection or incremental SVD algorithms to find a basis that captures, say, 90–95\% of the variance in ${\bf W}$. 
Another detail is that in transformers, weight matrices often come in pairs ({\it e.g.}, ${\bf W}_Q$ and ${\bf W}_K$ map from the same input dimension to possibly different output dims). 
We may initialize each separately, but NGC will later allow aligning their subspaces if beneficial.

One might ask: 
do we lose accuracy by doing this factorization? 
By itself, it is a compression. 
But NGC does not stop at factorizing; 
it treats these factors as trainable neuron states that will be further optimized (with a small amount of data or even during normal model fine-tuning). 
We also do not insist that the initial factorization fully reconstructs ${\bf W}$, but only that it is a good starting point. 
In fact, we use a two-phase process: 
state initialization and state training, which is necessary when using trainable metric between neurons beyond the simple dot product. 
Empirically, we found that even a random low-rank initialization of ${\bf A},~{\bf B}$ can work if followed by training, but the use of SVD initialization speeds up convergence and ensures that we start in a reasonable basin.

\subsection{NGC Policy: Intra- and Inter-Group Communication}
\label{subsec: NGC-policy, intra/inter-group Com}
Within a single neuronal group, {\it e.g.}, all neurons in an MLP layer, the low-rank factorization above captures the intra-group communication as the neurons in that group interact via shared states. 
For example, in a feed-forward layer, the activation of each output neuron is a weighted sum of the activations of the input neurons; 
after factorization, this becomes a sum over latent channels $r$, where each channel mixes a certain combination of input neurons with a combination of output neurons. 
We can think of those $r$ channels as $r$ ``topics'' or ``sub-communications'' that occur within the group. 
This is analogous to how within a company department, information might flow along a few key channels (like memos for specific purposes) rather than every individual separately talking to each other.

\paragraph{Remark (Virtual information flow).}
In this paper, \emph{information flow} refers to a \emph{notion of virtual} representation-level, not literal transmission of raw signals such as $y=Wx$ between neural blocks. 
There are two reasons. 
First, any explicit exchange of signals is already encoded in neuronal activations, so tracking those signals adds little explanatory value at the level of representation. 
Second, the true routing graph of a large language model spans many layers and residual paths and is therefore hard to analyze directly. 
We instead study a \emph{virtual flow} induced by the geometry of \emph{neuronal states}: 
when the input-side or output-side subspaces of two groups $G_i$ and $G_j$ are closely aligned under the learned metric, they effectively share a low-dimensional \emph{communication subspace} through which task-relevant activity could, in principle, be routed. 
Such a virtual flow can exist even without a physical pathway ({\it e.g.}, between a layer-1 $q$-projection and a layer-20 $k$-projection). 
Intuitively, it quantifies the \emph{counterfactual capacity to exchange information}: 
how strongly $G_i$ would influence $G_j$ if a light inter-group link was enabled. 
Formally, it is captured by spatial proximity and alignment in the state manifold ({\it e.g.}, similarity of representational patterns and geodesic closeness under the group-wise metric) and ties to our dynamics: 
a strong virtual flow predicts reduced residual energy when the corresponding link is instantiated.

Beyond intra-group communication, NGC also proposes inter-group communication, {\it i.e.,} allowing neurons in different groups to communicate and share information. 
In a standard feed-forward network, different layers are sequential and interact only through the activations passed between them; 
there is no direct weight connecting a neuron in the layer $\ell$ to a neuron in the layer $\ell+2$. 
However, consider that a neuron in layer $\ell$ and a neuron in layer $\ell+2$ might represent very similar concepts (due to how the model trained). 
If we allow them to share the same neuronal states in NGC, effectively we ``merge'' those two neurons into one underlying neuron (from the state space perspective), even though they live in different layers in the original model. 
This is one form of inter-group communication: 
neurons across layers (or across submodules like attention and feed-forward) can be synchronized to share neuronal states. 

Concretely, if ${\bf q}_i$ is the neuronal state of the neuron $i$ in one neuronal group and ${\bf q}_k$ is the neuronal state of the neuron $k$ in another group, and if we set ${\bf q}_k = {\bf q}_i$, then the information that the neuron $i$ handles is now immediately available to the neuron $k$.
They are effectively the same underlying neuron that participates in two different parts of the network. 
Original connectivity, {\it i.e.,} neural weights, is adjusted accordingly so that wherever $k$ is used, we also use the neuronal state of $i$ or vise versa. 
We do this in a way that preserves the originally learned functions as much as possible: 
typically, we would choose to merge neurons that we suspect had redundant roles, as indicated by similar weight patterns or similar activation behaviors.

More generally, rather than outright merging, NGC can allow partial sharing via aligning neuron states. 
In practice, we implement inter-group communication by constructing certain neuron groups that span what were separate communities. 
For example, suppose that layer $\ell$ and layer $\ell+1$ both have key-projection weight matrices ${\bf W}_K^\ell$ and ${\bf W}_K^{\ell+1}$. 
We might choose a structure called ``kk-qv'' where the \textbf{key} neurons of these two layers are considered as a neuronal group with inter-group communication, which means that we use a joint state matrix $\mathcal{Q}_{\text{key}}$ for both key projections of both layers. 
This means that the keys in layers $\ell$ and $\ell+1$ are constrained to lie in the same state subspace (or even exactly share neuronal states if fully merged). 
This inter-group tie can significantly reduce parameters and might not hurt accuracy much because those keys are likely doing similar things. 
Indeed, we will see that certain inter-layer communications improve stability and accuracy, while others do not, suggesting that some layers truly benefit from being unified whereas others must retain individuality.

For brevity, 
we call the original model ``root'' and the communicating model that allows inter-group communication ``com''.
To formalize, let $\mathcal{Q}^{(\text{root})}$ and $\mathcal{A}^{(\text{root})}$ denote the neuronal states and activations in the original model with only \emph{intra-group} communication. 
The activations are obtained by feeding several training sequences containing thousands of tokens to the original model and the after-NGC system, which has been initialized via the SVD method (see Eq.~\ref{eq: C_ij via SVD}).
Intra-group communication produces a low-rank $r$-dimensional representation of the original model.
Let $\mathcal{Q}^{(\text{com})}$ and $\mathcal{A}^{(\text{com})}$ be the counterpart when \emph{inter-group} communication is also enabled.
Note that such a ``root'' system differs from the ``com'' system induced by the NGC policies such as ``q-k-v'' which do not require inter-group communication.
Although these NGC policies produce $\mathcal{Q}^{(\text{com})}$ numerically equal to $\mathcal{Q}^{(\text{root})}$, their activations are different, {\it i.e.}, $\mathcal{A}^{(\text{com})}\neq\mathcal{A}^{(\text{com})}$.
Intuitively, the ``root'' system is an ideally low-rank neural system whose neuronal activations are identical to the original model.

Now, we are concerned about how the reasoning capacity of the ``com'' system varies compared to the ``root'' system.
In other words, we expect to determine whether the reasoning capacity of an LLM, which corresponds to a specific NGC policy, improves or degrades based on the variation between the neuronal states of their neurons.
The change in reasoning capacity is principally due to the variation of the state dimension.
Generally, for a fixed compression ratio in parameter counts, 
if we allow several neuronal groups to communicate, we will effectively increase the dimensionality of their neuronal state, since they are provided with additional dimensions to accommodate shared information.
Let $r^*$ denote the enhanced neuronal state dimension when inter-group communication is allowed, typically $r^* > r$.
In practice, we adjust $r^*$ in advance to keep the total parameters within budget, {\it e.g.}, if merging groups saves parameters, we could spend a bit on increasing rank.

For better implementation, we enumerate the NGC policy including inter-group communication as an assignment of which neuronal groups share information. 
The simplest NGC policy is ``no inter-group communication'', 
in which each layer is factorized independently; 
this is similar to SVD-LLM. 
Another NGC policy could be ``all successive two layers share identical type of neuronal block'', {\it i.e.,} each ${\bf W}_{Q}^{\ell}$ merges with ${\bf W}_{Q}^{\ell+1}$, each ${\bf W}_{K}^{\ell}$ merges with ${\bf W}_{K}^{\ell+1}$, and each ${\bf W}_{V}^{\ell}$ merges with ${\bf W}_{V}^{\ell+1}$.
This policy is called ``qq-kk-vv'' and aligns with the optimal basis sharing mechanism in \citet{wangbasis}.
We explore many more structured NGC policies: 
sharing across different types of neural blocks, {\it e.g.,} ${\bf W}_{Q}^{\ell}$ merges with ${\bf W}_{V}^{\ell+1}$ or ${\bf W}_{K}^{\ell}$ merges with both ${\bf W}_{Q}^{\ell}$ and ${\bf W}_{V}^{\ell}$, {\it etc}.
Each NGC structure yields a different parameter efficiency and a different network behavior.
Next, we will develop a theoretical framework to measure and even predict the reasoning capacity of a neural system, corresponding to an LLM assigned with an NGC policy, without running the LLM on a validation set.

\subsection{Dynamics-Inspired Neural System and Neuronal Stability}
\label{subsec: dy-Neural system and neuronal stability}
Having established the structural aspect (neuronal states and their sharing), we now introduce the dynamical update rules that govern how neuron activations and neuronal states evolve in NGC. 
We conceive of the forward pass of the LLM and the evolution of neuronal states during training or an NGC as a coupled dynamical process, 
described by two equations (Eq.~\ref{eq: raw-dynamical-sys}, which is expanded to Eq.~\ref{eq: simplified-dynamical-sys} under certain conditions). 
At a high level, we treat the token index $t$ as a discrete time-step for activations $\mathcal{A}^{(t)}$, 
and an artificial time-step $\tau$ for the evolution of neuronal states $\mathcal{Q}^{(\tau)}$ (which typically evolves during training iterations or neuronal communication).
In fact, when we use the ``root'' and ``com'' notation,
and suppose $\mathcal{Q}^{(\text{root})} = \mathcal{Q}^{(\tau_0)}$,
then the after-NGC neuronal states $\mathcal{Q}^{(\text{root})}$ are equivalent to $\mathcal{Q}^{(\tau_0+T_{com})}$ for a specific $T_{com}$.
Here, we take $\mathcal{Q}^{(\tau)}$ and $\mathcal{Q}^{(\text{com})}$ as two separate variables, 
since the intermediate states between $\mathcal{Q}^{(\text{root})}$ and $\mathcal{Q}^{(\text{com})}$ are difficult to enumerate and analyze in a limited computational budget.

Now, the dynamics-inspired differential equations for the neural system assigned with an NGC policy are given by:
\begin{equation}\label{eq: raw-dynamical-sys}
    \begin{gathered}
        \frac{d}{dt}\mathcal{A}^{(\text{com};~t)}={\bf D}[\mathcal{A}^{(\text{com};~t)};~\mathcal{A}^{(\text{root};~t)}]+{\bf E}[\mathcal{Q}^{(\text{com};~\tau)};~\mathcal{Q}^{(\text{root};~\tau)}],\\
        \frac{d}{d\tau}\mathcal{Q}^{(\text{com};~\tau)}={\bf F}[\mathcal{Q}^{(\text{com};~\tau)};~\mathcal{Q}^{(\text{root};~\tau)}]+{\bf G}[\mathcal{A}^{(\text{com};~t)};~\mathcal{A}^{(\text{root};~t)}].\\
    \end{gathered}
\end{equation}
where $\{{\bf D},~{\bf E},~{\bf F},~{\bf G}\}$ are abstract linear operators or projections that mix the information between the communicating ``com'' and the ``root'' neural systems. 
The change in after-NGC ``com'' activations is driven by a combination of how the activations of the ``root'' model would change and how far the ``com'' neuronal states induced by NGC have moved from the ``root'' states;
and conversely, the change in ``com'' neuronal states induced by NGC is driven by differences between ``com'' and ``root'' states and activations. 
Although this formulation may seem complex, it essentially couples the optimization of the neuronal state with the forward computation in a way that tries to keep the neuronal behavior, {\it i.e.,} neuronal activations, of the after-NGC ``com'' model close to the ``root'' model.

Under Proposition~\ref{app-prop: simplified-dynamical-sys}, we show that if the training data cover diverse cases sufficiently, the above continuous dynamics can be reduced to a discrete dynamics:
\begin{equation}\label{eq: simplified-dynamical-sys}
    \begin{gathered}
        \mathcal{A}_{}^{(\text{com};~t+1)}=\lambda\cdot\mathcal{A}_{}^{(\text{root};~t+1)}\mathcal{T}_{}^{(trs)}+(1-\lambda)\cdot\mathcal{A}_{}^{(\text{com};~t)}\mathcal{T}_{}^{(com)},\\
        \mathcal{Q}_{}^{(\text{com};~\tau+1)}=\lambda\cdot\mathcal{Q}_{}^{(\text{root};~\tau+1)}\mathcal{H}_{}^{(trs)}+(1-\lambda)\cdot\mathcal{Q}_{}^{(\text{com};~\tau)}\mathcal{H}_{}^{(com)}.
    \end{gathered}
\end{equation}
for some $\lambda\in(0,1)$, and certain projection matrices $\mathcal{T}_{in}^{(trs)},~\mathcal{T}_{in}^{(com)}\in\mathbb{R}^{N_{in}\times N_{in}}$;
$\mathcal{H}_{in}^{(trs)},\mathcal{H}_{out}^{(trs)}\in\mathbb{R}^{r\times r^*}$;
$\mathcal{H}_{in}^{(com)},\mathcal{H}_{out}^{(com)}\in\mathbb{R}^{r^*\times r^*}$;
$\mathcal{T}_{out}^{(trs)},~\mathcal{T}_{out}^{(com)}\in\mathbb{R}^{N_{out}\times N_{out}}$. 
Intuitively, 
at each token step, the activations $\mathcal{A}^{(\text{com})}$ are updated to be a weighted average of (a transform of) the new activations of the root model and the previous activations of the NGC model. 
Similarly, neuronal states $\mathcal{Q}^{(\text{com})}$ evolve by blending the newly observed root states with the previous NGC states. 

In effect, NGC tracks the trajectory of the ``root'' system, but with some momentum or persistence from its own past state. 
The matrices $\mathcal{T}^{(\text{trans})}, \mathcal{T}^{(\text{com})}$ (for input/output activations) and $\mathcal{H}^{(\text{trans})}, \mathcal{H}^{(\text{com})}$ (for neuronal states) are determined by the specifics of how communication is structured. 
For example, $\mathcal{T}^{(\text{trans})}$ might be a selection matrix that selects the activation of the root neuron to incorporate, while $\mathcal{T}^{(\text{com})}$ could be a contraction of the current activation into a compact latent space. 
Eq.~\ref{eq: simplified-dynamical-sys} defines a stable iterative process that the neuronal activations of the NGC model follow to mimic the ``root'' system.
The pipeline of Eq.~\ref{eq: simplified-dynamical-sys} is presented in Figure~\ref{fig: pipeline}.

\begin{figure*}[t]
\centering
    \includegraphics[width=5.0in]{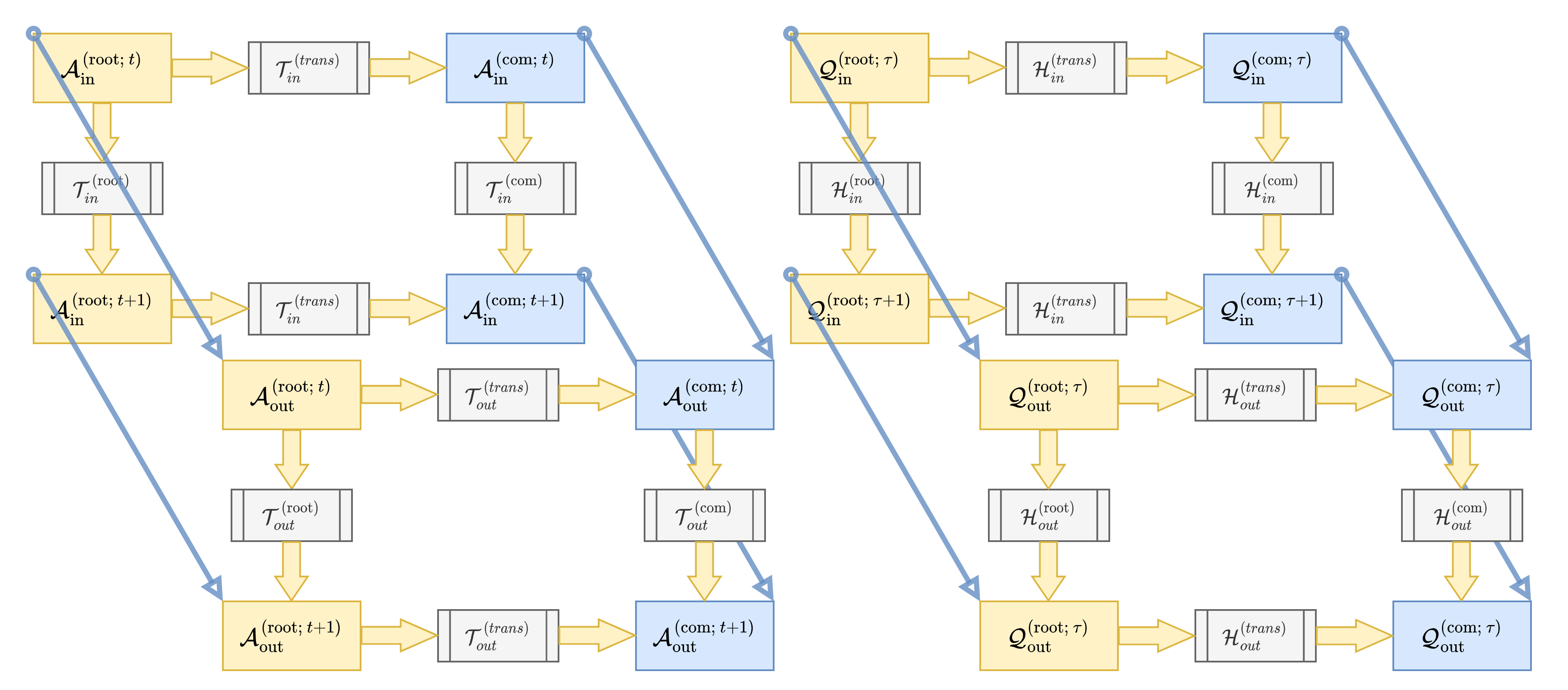}%
    \caption{
    The ``root'' system (original model) and the ``com'' system (under an NGC policy) evolve through coupled updates of activations $\mathcal{A}$ and neuronal states $\mathcal{Q}$. 
    Their token and structure-wise evolutions are approximated by linear projections $\mathcal{T}$ and $\mathcal{H}$, yielding a stable discrete update that lets the ``com'' system follow the ``root'' trajectory while permitting controlled deviations induced by inter-group communication.
    }
    \label{fig: pipeline}
\end{figure*}

Using Eq.~\ref{eq: simplified-dynamical-sys} as the backbone, we define a measure of the discrepancy, essentially, how far the after-NGC ``com'' system strays from the ``root'' system at each step. 
By Theorem~\ref{app-thm: neuronal stability measures reasoning capacity}, the defined neuronal stability score in Eq.~\ref{eq: simplified-dynamical-sys} mirrors the usual ISS (input-to-state stable)/Lyapunov bound and can measure the corresponding LLM reasoning capacity.
In other words: 
more contractive token dynamics and smaller residuals give better stability and, typically, better task accuracy.
To obtain the exact form, we need to define variational residuals for each neural block, {\it e.g.,} each layer or sub-layer, for both the transition part (``root'' vs. ``com'' activation) and the communication part.
\begin{equation}\label{eq: define-grad-T}
    \begin{gathered}
        \nabla\mathcal{T}_{x,\dagger}^{(trs)}=\frac{1}{T}\sum_{t}\frac{\mathcal{A}_{x,\dagger}^{(\text{root};~t)}\mathcal{T}_{x,\dagger}^{(trs)}-\mathcal{A}_{x,\dagger}^{(\text{com};~t)}}{\mathcal{A}_{x,\dagger}^{(\text{com};~t)}}\in\mathbb{R}^{N_{\dagger}}\\
        \nabla\mathcal{T}_{x,\dagger}^{(com)}=\frac{1}{T}\sum_{t}\frac{\mathcal{A}_{x,\dagger}^{(\text{com;~t})}\mathcal{T}_{x,\dagger}^{(com)}-\mathcal{A}_{x,\dagger}^{(\text{com};~t+1)}}{\mathcal{A}_{x,\dagger}^{(\text{com};~t+1)}}\in\mathbb{R}^{N_{\dagger}}
    \end{gathered}
\end{equation}
Here, $\dagger\in\{\text{in},\text{out}\}$ is the role of neurons.
Similarly, we define $\nabla\mathcal{H}_{x,\dagger}^{(trs)},\nabla\mathcal{H}_{x,\dagger}^{(com)}\in\mathbb{R}^{r^{*}}$.
These variational residuals also correlate with the nonlinearity by measuring the actual error induced by the best-fitted linear transformation.
For brevity, we define an expanded matrix form of the dynamical projections,
\begin{equation}\label{eq: define-augemented-Phi}
    \begin{gathered}
        \Phi_{x,\dagger} = 
        \begin{pmatrix}
            \mathcal{T}_{x,\dagger}^{(trs)} &
            \nabla\mathcal{T}_{x,\dagger}^{(trs)} &
            \mathcal{T}_{x,\dagger}^{(com)} &
            \nabla\mathcal{T}_{x,\dagger}^{(com)}\\
            \mathcal{H}_{x,\dagger}^{(trs)} & 
            \nabla\mathcal{H}_{x,\dagger}^{(trs)} &
            \mathcal{H}_{x,\dagger}^{(com)} &
            \nabla\mathcal{H}_{x,\dagger}^{(com)}
        \end{pmatrix}\in\mathbb{R}^{(N_{\dagger}+r^*)\times 4}
    \end{gathered}
\end{equation}
Thus, the neuronal stability score is computed as follows.
\begin{equation}\label{eq: define-neuronal-stability-score}
    \begin{gathered}
        \mathcal{S}=\sum_{x}\sum_{\dagger}\lVert\Phi_{x,\dagger}\mathcal{F}_{x}\Phi_{x,\dagger}^{\top}\rVert
    \end{gathered}
\end{equation}
where $\mathcal{F}_{x}\in\mathbb{R}^{4\times 4}$ is a learnable metric that acts as a Riemannian metric for each neural block.
In simpler terms, $\mathcal{S}$ is a weighted sum of the norms of these residuals; 
minimizing $\mathcal{S}$ encourages all residuals to be small, meaning that the step of NGC (transitional and communicative) matches closely the root’s. 
One might be concerned that this $\mathcal{S}$ does not clearly relate to the neural structure and the NGC topology. 
However, the spatial proximity of the neuronal states ({\it i.e.}, if inter-group communication has aligned neurons properly) already encodes the topology of how information should flow. 
Thus, simply summing the variational terms per-block is enough to capture a meaningful global property.
In practice, we derive an unsupervised approximation of $\mathcal{S}$ given by:
\begin{equation}\label{eq: define-neuronal-stability-score-simple}
    \begin{gathered}
        \mathcal{S}_{\text{approx}}=\sum_{x}\sum_{\dagger}\ln{\Big(\frac{\sigma_{max}(\mathcal{T}_{x,\dagger}^{(trs)})}{\sigma_{max}(\mathcal{T}_{x,\dagger}^{(com)})}\Big)}-\alpha\cdot\nabla\mathcal{T}_{x,\dagger}^{(com)}-\beta\cdot\nabla\mathcal{H}_{x,\dagger}^{(com)}
    \end{gathered}
\end{equation}
where $\sigma_{\max}(\mathcal{T})$ denotes the largest singular value of the matrix $\mathcal{T}$, 
and $\alpha, \beta$ are scalar hyper-parameters. 
Higher $\mathcal{T}_{x,\dagger}^{(com)}$ and $\mathcal{H}_{x,\dagger}^{(com)}$ correlate with the non-linearity in the time-varying dynamical ``com'' system.
Therefore, Eq.~\ref{eq: define-neuronal-stability-score-simple} also means that a better NGC policy often refers to a non-predictable conversion from ``root'' such that the best-fitted linear transformation is less capable of describing the token-varying evolution in the ``com'' system. 
Simply speaking, emergent reasoning capacity refers to the external forces that drive the neural system away from its usual dynamical behaviors.
We theoretically validate this argument in Theorem~\ref{app-thm: emergent potential equals non-predictability}.

We compute this $\mathcal{S}_{\text{approx}}$ cheaply for candidate NGC policies and use it to select the best policy prior to full training and inference. 
Normally, one would have to try all compression configurations and evaluate the validation data to choose the best. 
Here, we rather use a theory-guided metric to predict which configuration will be most stable (and presumably expressive). 
Indeed, we find that the structure with the best $\mathcal{S}_{\text{approx}}$ typically corresponds to the one with the best performance.

\subsection{Initialization and Training Protocol}
\label{subsec: initialization and training}

After choosing a communication structure (which defines which neurons/groups share states), we proceed to initialize and train the NGC model. 
The process is as follows:

\subsubsection*{Activation Capture}
We first run a collection of sequences (a few dozen up to a few hundred) through the original pretrained LLM (without any modification) and record neuronal activations $\mathcal{A}_{\text{in}}$ and $\mathcal{A}_{\text{out}}$ for the layers we plan to reinterpret. 
These sequences can be a subset of the training data or another representative text; 
the goal is to get a snapshot of typical activation patterns. 
This is used both to inform the initial SVD (if doing activation-weight combined factorization) and to set up any activation-aware transforms, {\it e.g.,} the diagonal scaling used in ASVD~\citep{yuan2023asvd}, though in our main pipeline we often do a simpler initialization.

\subsubsection*{State Initialization via Inverse Mapping}
Using captured neuronal activations and original weights, we apply an inverse communication mapping to solve for initial communication states ${\bf C}_{ij}$ between groups. 
In the  formula, for each pair of groups $G_i, G_j$ that are set to communicate in our structure, we compute
\begin{equation}\label{eq: define-Cij-via-inverseCom}
    \begin{gathered}
        {\bf C}_{ij}=\mathcal{I}_{ij}\big(\mathcal{A}_i~,\mathcal{A}_j~;{\bf W}_{ij}\big),~~~~~i,j\in\mathcal{N}.
    \end{gathered}
\end{equation}
where $\mathcal{I}_{ij}$ is a function that ``infers'' a low-rank communication state from the observed activations $\mathcal{A}_i, \mathcal{A}_j$ and the original weight connecting those groups.
Although Eq.~\ref{eq: define-Cij-via-inverseCom} is abstract, in practice we choose it to minimize a combination of activation reconstruction error and weight reconstruction error. 
Specifically, we define two residuals: 
the activation-restricted residual:
\begin{equation}\label{eq: define-act-residual}
    \begin{gathered}
        \mathcal{R}_{ij}^{(Act)}({\bf C}_{ij})=\sum_{t=1}^T\lVert {\bf a}_i^{(\tau)}{\bf W}_{ij}-{\bf a}_i^{(\tau)}{\bf C}_{ij}\rVert_2^2,
    \end{gathered}
\end{equation}
where ${\bf a}_i(\tau)$ is the column of $\mathcal{A}_i$ in the token step $\tau\in\{1,...,T\}$.
We then introduce the weight-restricted residual as follows.
\begin{equation}\label{eq: define-weight-residual}
    \begin{gathered}
        \mathcal{R}_{ij}^{(W)}({\bf C}_{ij})=\sum_{t=1}^T\lVert {\bf W}_{ij}-{\bf C}_{ij}\rVert_2^2,
    \end{gathered}
\end{equation}
We then set up a gradient flow:
\begin{equation}\label{eq: d_Cij/dt}
    \begin{gathered}
        \frac{d}{dt}{\bf C}_{ij}(\tau)=-\Psi\circ{\big[\mathcal{R}_{ij}^{(Act)}({\bf C}_{ij})~;~
        \mathcal{R}_{ij}^{(W)}({\bf C}_{ij})\big]}^{\top},
    \end{gathered}
\end{equation}
which means we iteratively adjust ${\bf C}_{ij}$ to reduce both residuals. 
In other  words, we want a communication state ${\bf C}_{ij}$ that both reproduces the effect of ${\bf W}_{ij}$ on the observed activations and is itself low-rank similar to ${\bf W}_{ij}$.
We show in Proposition~\ref{app-prop: C_ij via SVD} that the solution of this flow, at convergence, is equivalent to a formula:
\begin{equation}\label{eq: C_ij via SVD}
    \begin{gathered}
        {\bf C}_{ij}\leftarrow \lambda\cdot\mathcal{A}_i^{-1}\text{SVD}\big[\mathcal{A}_i{\bf W}_{ij}\big] + (1-\lambda)\cdot \mathcal{X}^{-1}\text{SVD}\big[\mathcal{X}{\bf W}_{ij}\big],~~~\lambda\in (0,1)
    \end{gathered}
\end{equation}
for some $\lambda\in(0,1)$. 
Here, $\mathcal{A}_i^{-1}$ is the pseudo-inverse of the activation matrix (which captures the subspace of ${\bf W}_{ij}$ actually utilized by the activations of the group $i$), 
and $\mathcal{X}$ is a random matrix of the same shape (injecting a small exploratory component to cover unused directions). 
Essentially, Eq.~\ref{eq: C_ij via SVD} says: 
take the SVD of the weighted activations $\mathcal{A}_i \textbf{W}_{ij}$, and also of a full weight sampling $\mathcal{X} \textbf{W}_{ij}$, and blend them. 

The result ${\bf C}_{ij}$ spans the principal subspace in which the information genuinely flows during inference, affecting the important parts of ${\bf W}_{ij}$ without the need for full rank. 
This step yields the initial neuronal states $\mathcal{Q}_{\text{in}}$ for the neuronal group $i$ and $\mathcal{Q}_{\text{out}}$ for the neuronal group $j$ by factorizing ${\bf C}_{ij}$ into $\mathcal{Q}_i \mathcal{Q}_j^{\top}$. 
In practice, an easier, but slightly less precise way to initialize is simply to perform SVD on ${\bf W}_{ij}$ itself but restricted to the top eigen-activations, as indicated by $\mathcal{A}_i$. 
We found that either method gives a good starting point for $\mathcal{Q}$.

\subsubsection*{State Alignment and NGC policy Optimization}

If our NGC policy requires that some neuronal groups are merged or share neurons, we perform a step of alignment. 
This means that if two groups $G_i$ and $G_j$ are supposed to communicate (share state), we identify which neurons overlap. 
Sometimes we allow partial overlap, and only a subset of neurons or a subset of the latent dimensions $r^*$ is shared. 
We then adjust the neuronal states $\mathcal{Q}$ of those neurons to be identical and adjust the remaining ones accordingly to not disrupt known communications ${\bf C}$. 
Simply speaking, an NGC policy tells us which neuronal groups should be merged, then we merge them by setting up a globally shared one.
If these to-be-shared neuronal groups have different shapes, then we use zero-padding to fit them.
Next, we apply Eq.~\ref{eq: C_ij via SVD} to initialize neuronal states under this NGC policy.

Theoretically and practically, we can simply use shared input neurons to simulate NGC policies (see Fig~\ref{fig: overall-pipeline}).
Our Theorem~\ref{app-thm: shared input neurons simulate any NGC} justifies that shared input neurons are expressive enough to represent arbitrary inter-group NGC, 
including slow/asynchronous communications; 
this instantaneous sharing realized by shared input neurons is a convenient engineering special case.
In practice, we are often provided with several NGC policies, each of which corresponds to an initialized neural system.
We can simply replace the neural blocks of an LLM with the neural system, {\it i.e.}, an operation like ${\bf y}_{j}\!=\!\sum_{i}{\bf W}_{ij}{\bf x}_i$ is implemented as ${\bf y}_{j}\!=\!\sum_{i}\mu(\mathcal{Q}_{i},\mathcal{Q}_{j})\cdot{\bf x}_i$, where $\mu$ is a metric such as dot-product operation.
Note that $\mu$ could also be a more complex trainable metric, {\it e.g.}, MLP, which requires an additional fast training to calibrate.
This fast training requires only 2-3 training sequences to generate neuronal activations for supervised learning.
After initialization, we capture neuronal activation using validation sequences, which consist of hundreds or thousands of tokens.
The captured activations account for $\mathcal{A}^{(com)}$ denoted in Eq.~\ref{eq: raw-dynamical-sys}.
Then, we can use Eq.~\ref{eq: define-neuronal-stability-score-simple} to measure the neuronal stability score of the current ``com'' neural system, and determine whether to keep this neural system as the final one for further fine-tuning and evaluation.

Once all initial states $\mathcal{Q}$ and interactive metrics ${\bf G}, {\bf H}$ between neuronal states are established if using a learned metric, 
we ``freeze'' the high-level architecture of the neural system, {\it i.e.}, no new merges or changes in grouping, and perform a brief fine-tuning of neuronal states in the training set. 
We do not update any original weights anymore; those have been replaced by our neuronal communications. 
Instead, we update the entries of $\mathcal{Q}_{\text{in}}$ and $\mathcal{Q}_{\text{out}}$ via the backpropagation algorithm~\citep{rumelhart1986learning}, using the task loss ({\bf, e.g.}, next-token prediction loss if language modeling, or a supervised objective if fine-tuning on a benchmark) plus possibly a stability regularization term. 
This training is very efficient: 
the number of trainable parameters is reduced ({\bf e.g.}, a 30\% compression means only 30\% as many parameters to update), and we can often achieve convergence in a small number of iterations. 
In our experiments, updating the neuronal states for an 8B model on a reasoning task takes only a few minutes on a single GPU. 
This is orders-of-magnitude less than full model fine-tuning or inference on the entire benchmark, highlighting the practicality of NGC as a post-processing step rather than fine-tuning.

\subsubsection*{Inference with NGC}
After training, we deploy the NGC system in test tasks. 
Inference proceeds as usual (forward passes through the transformer), except that at each layer, instead of a dense matrix multiplication, we do two smaller ones (${\bf A}[i] \cdot {\bf B}[j]^{\top}$ style) and if required, project through any shared metrics or state alignments. 
Specifically, for a typical neural operation ${\bf y}_{j}\!=\!\sum_{i}{\bf W}_{ij}{\bf x}_i$, 
we instead use ${\bf y}_{j}\!=\!\sum_{i}\mu(\mathcal{Q}_{i},\mathcal{Q}_{j})\cdot{\bf x}_i$ or ${\bf y}_{j}\!=\!\sum_{i}\mu(\mathcal{Q}_{i},\mathcal{Q}_{j};{\bf x}_i)$ in the NGC system, where $\mu$ is a pre-defined metric function such as the dot-product or some complex sub-networks.

We also implement the inter-group communications: 
for simultaneous sharing, as we assumed, if two groups share input states, we ensure that they use the same $\mathcal{Q}_{\text{in}}$ matrix. 
We interpret this simultaneous sharing as the input neurons of those groups’ having ``identical neuronal states''. 
We assume inter-group communication is simultaneous, meaning that at inference time the groups effectively behave as one group (with merged states) rather than separately passing messages. 
This was a design choice to keep inference parallelizable and simple.

With the methodology established, we now turn to the empirical evaluation of NGC. 
We detail results on several reasoning benchmarks and comparisons with baseline compression methods.

\clearpage
\section{Empirical Results}

\subsection{Experimental Setup and Benchmarks}
We extensively evaluated the NGC framework on four difficult reasoning benchmarks that test LLM understanding and multi-step problem-solving:
\begin{itemize}
    \item MMLU-Pro~\citep{wang2024mmlu}: an improved version of the Massive Multitask Language Understanding benchmark. 
    MMLU-Pro consists of more than 12,000 challenging multiple-choice questions across 14 diverse domains, with an emphasis on reasoning over pure recall (it increases question complexity and answer choices from the original MMLU). 
    \item GPQA (Graduate-level QA)~\citep{rein2024gpqa}: a recently introduced benchmark of 448 highly difficult multiple-choice questions written by experts in biology, physics, and chemistry. 
    GPQA is designed such that even humans with internet access struggle, ensuring that only strong reasoning and deep knowledge can solve them. 
    \item GSM8K~\citep{cobbe2021training}: a dataset of about $8{,}000$ grade-school math word problems requiring multi-step arithmetic and reasoning. 
    We use the standard test set of 1319 problems. 
    \item MATH-500~\citep{hendrycks2021measuring}: a curated subset of 500 challenging competition-level mathematics problems drawn from the MATH dataset. 
    These problems often require advanced techniques (algebra, calculus, number theory) and strategic reasoning. 
\end{itemize}

{\bf Backbone models}:
We evaluate four open-source LLMs, including Qwen3-4B/8B~\citep{yang2025qwen3}, Mistral-7B~\citep{jiang2023mistral7b}, and LLaMA3-8B~\citep{llama3modelcard}, which represent different foundations and sizes and reported accuracy (\%) on the benchmarks.
These LLMs vary in architecture and pretraining, giving us a broad testbed for NGC. 
All models were used in their pre-trained form (we did not do additional full fine-tuning on the reasoning tasks; 
however, some baselines like SVD-LLM involve a compression fine-tuning step which we include for fairness).

For each model, we apply the NGC policies with a target compression ratio defined as the fraction of the original model parameters retained. 
We report experiments at 30\% parameters for the self-attention layers ({\it i.e.}, 70\% compression) and 70\% parameters for the MLP layers (30\% compression), 
as these were two operating points of interest, one heavily compressed scenario for attention (since attention has many weight matrices per layer, we compress them more aggressively), and a moderate compression for MLPs (which are larger matrices, but we compress them less to keep sufficient capacity for numerical reasoning).

{\bf Baselines}:
We compare NGC against: 
(a) the original model without compression, 
(b) SVD-LLM compression~\citep{wang2024svd,wang2025svd} to the same ratio, 
and (c) BasisSharing~\citep{wangbasis} using the cross-layer shared SVD approach with empirically recommended layer selection.
We ensure that all methods are evaluated in the same setting (using the same number of fine-tuning steps on the same data for compression).

{\bf Metrics}: 
We primarily report overall accuracy (\%) on each benchmark. 
Since MMLU-Pro contains many categories, we follow common practice to report the average accuracy in all questions (not weighted by category size). 
For GPQA, GSM8K, MATH-500, it is straightforward accuracy in the test sets. 
We also examine some qualitative behavior ({\it e.g.}, whether the model’s chain-of-thought or intermediate steps seem stable), but the main focus is on quantitative accuracy.

Finally, we analyze an ablation concerning the interaction metric. 
Recall that NGC by default uses a simple dot product ${\bf q}_i \cdot {\bf q}_j$ to compute a communication ${\bf C}_{ij}$. 
We experiment with a trainable bilinear form: 
${\bf C}_{ij} \approx \sigma({\bf q}_i {\bf G}) \cdot (\sigma({\bf q}_j {\bf H}))^{\top}$ for some matrices ${\bf G},~{\bf H}$ that project the states to a higher-dimensional $\tilde{r}$ space before the dot product, with $\sigma$ being a nonlinearity like Tanh.
If ${\bf G}={\bf H}$ (the same projection for both sides), the metric is symmetric, akin to an inner product in a learned Riemannian space. 
We compare performance when using no metric (dot-product only), 
separate ${\bf G},~{\bf H}$ vs. shared ${\bf G}={\bf H}$, and varying $\tilde{r}$. 
This tests whether adding a bit more flexibility in how neurons measure the similarity of each other can improve compression (at the cost of a few additional parameters in ${\bf G},~{\bf H}$ which are negligible in size).

\subsection{Overall Performance on Reasoning Tasks}

Table~\ref{tab: acc-self-attn} presents the overall test accuracy of each method on each benchmark for self-attention part compression (30\% of original parameters retained). 
For brevity, we show results for Mistral-7B, LLaMA3-8B, Qwen3-4B, and Qwen3-8B in the table. 
The structure ``q-k-v'' with no inter-group communication is considered the ``root'' system baseline for NGC stability calculations, corresponding to just factorizing each projection independently.

\begin{table*}[t]
\caption{
Overall accuracy (\%) on reasoning benchmarks with 70\% compression of self-attention parameters.
}
\vskip 0.1in
\label{tab: acc-self-attn}
\centering
\small
\begin{tabular}{@{}cccccc@{}}
\toprule
 &  & MMLU-pro & GPQA & GSM-8k & MATH-500 \\ \midrule
\multirow{4}{*}{Mistral-7B} 
 & Original & 34.29 & 29.69 & 53.46 & 23.15 \\ \cmidrule{2-6}
 & SVD-LLM & 23.96 & 26.34 & 45.15 & 20.62 \\
 & BasisSharing & 23.17 & 26.56 & 47.62 & 22.68 \\
 & NGC (Ours) & 24.50 & 27.46 & 49.15 & 23.19 \\ \midrule
\multirow{4}{*}{LLaMA3-8B} 
 & Original & 41.37 & 31.65 & 80.90 & 45.80 \\ \cmidrule{2-6}
 & SVD-LLM & 21.08 & 22.15 & 65.23 & 38.42 \\
 & BasisSharing & 27.04 & 25.43 & 70.38 & 41.14 \\
 & NGC (Ours) & 29.67 & 28.67 & 75.83 & 43.37 \\ \midrule
\multirow{4}{*}{Qwen3-4B} 
 & Original & 49.11 & 36.50 & 89.86 & 58.60 \\ \cmidrule{2-6}
 & SVD-LLM & 18.33 & 25.02 & 55.46 & 30.16 \\
 & BasisSharing & 33.67 & 27.90 & 77.76 & 53.35 \\
 & NGC (Ours) & 36.04 & 28.57 & 81.05 & 60.56 \\ \midrule
\multirow{4}{*}{Qwen3-8B} 
 & Original & 59.74 & 38.35 & 89.98 & 61.70\\ \cmidrule{2-6}
 & SVD-LLM & 38.21 & 28.74 & 61.25 & 36.42 \\
 & BasisSharing & 45.50 & 31.22 & 79.38 & 56.73 \\
 & NGC (Ours) & 49.12 & 33.53 & 82.95 & 61.14 \\ \bottomrule
\end{tabular}
\vskip -0.1in
\end{table*}

As presented in Table~\ref{tab: acc-self-attn}, we see that NGC consistently recovers more accuracy than SVD-LLM and BasisSharing at the same compression level.
Thus, the empirical result implies that the reorganized representation of the NGC is leveraging the remaining parameters so effectively that the advanced math capability of the model is intact at half the size.
We hypothesize that this is due to the stability-oriented training of NGC, which helps particularly in multi-step logical tasks.

\subsection{Analysis of Interaction Metric and Ablation}
\label{subsec-exp: interact-Metric and ablation}

An intriguing finding during our experiments was the effect of using a shared interaction metric for computing communication states. 
As described, we tried making the effective inner product ${\bf q}_i^{\top} {\bf q}_j$ in $\sigma({\bf q}_i^{\top} {\bf G}^{\top})\cdot\sigma({\bf H} {\bf q}_j)$ for some matrices ${\bf G},~{\bf H}$. 
If ${\bf G}\equiv{\bf H}$, the metric is symmetric and resembles a learned Mahalanobis inner product. 
This was inspired by seeing Eq.~\ref{eq: define-neuronal-stability-score} in our derivations, where $F_x$ acted like a metric that could be learned to weight different residuals. 
In practice, we found that using a larger projected dimension $\tilde{r}$ improved performance (more expressive metric), and, interestingly, enforcing $G=H$ (the same metric for inputs and outputs) consistently outperformed, allowing $G, H$ to differ given the same number of parameters. 
In other words, a shared symmetric metric benefited the model more, suggesting a kind of consistency or symmetry in how input vs output neuron states are measured is helpful. 
This is reminiscent of a Riemannian metric, which is by definition symmetric in its arguments, here the neuron $i$ and $j$.

\begin{table*}[t]
\caption{
{\bf Overall accuracy (\%) on Qwen3-4B reasoning tasks in terms of intra-group communication mechanism under different NGC policies}.
Default settings: separate metrics ${\bf g}$ and ${\bf h}$;
}
\vskip 0.1in
\label{tab: }
\centering
\small
\begin{tabular}{@{}cccccc@{}}
\toprule
NGC Policy & Intra-group Com. & MMLU-pro & GPQA & GSM-8k & MATH-500 \\ \midrule
 & Original & 49.11 & 36.50 & 89.86 & 58.60 \\ \cmidrule{2-6}
 \multirow{4}{*}{q-k-v} 
 & no metric & 18.33 & 25.02 & 55.46 & 30.16 \\
 & $\tilde{r}=1.5r$ & 20.58 & 30.10 & 56.75 & 30.54 \\
 & $\tilde{r}=3r$ & 22.38 & 33.15 & 58.32 & 31.23 \\
 & ${\bf g}\equiv{\bf h}~\&~\tilde{r}=1.5r$ & 22.13 & 32.68 & 57.44 & 30.85 \\ \midrule
 \multirow{4}{*}{qq-kk-vv} 
 & no metric & 33.67 & 27.90 & 77.76 & 53.35 \\
 & $\tilde{r}=1.5r$ & 32.86 & 35.96 & 77.85 & 54.24 \\
 & $\tilde{r}=3r$ & 33.37 & 33.34 & 78.46 & 55.50 \\
 & ${\bf g}\equiv{\bf h}~\&~\tilde{r}=1.5r$ & 30.87 & 29.33 & 77.95 & 54.68 \\ \midrule
\multirow{4}{*}{hybrid} 
 & no metric & 36.04 & 28.57 & 81.05 & 60.56 \\
 & $\tilde{r}=1.5r$ & 36.08 & 28.96 & 81.29 & 61.05 \\
 & $\tilde{r}=3r$ & 36.55 & 29.79 & 83.32 & 62.21 \\
 & ${\bf g}\equiv{\bf h}~\&~\tilde{r}=1.5r$ & 35.21 & 29.43 & 82.79 & 61.94 \\
 \bottomrule
\end{tabular}
\vskip -0.1in
\end{table*}

In quantitative terms, with $\tilde{r}$ around 3x the original rank $r$, we observe absolute accuracy gains of 0.5\% on average in tasks when using metric learning, and the shared metric variant is approximately 0.3-2.5\% better than the unshared. 
Note that the metric network is shared by every neuronal group, the gain of parameters (less than 0.1 million) is negligible to the entire LLM or neural system.
These performance gains are consistent across multiple settings, suggesting that the effect is real. 
The improvement is more noticeable under higher compression ratios, where the extra flexibility of a metric helps the low-rank approximation emulate a full-rank. 
Adding this metric also adds parameters ($O(r \times \tilde{r})$), although in our trials we kept it minor relative to the whole model.

In general, the results on the interaction metric reinforce our theoretical view that the NGC neuron state space can be endowed with geometry (a metric) that is learnable and that this geometry plays a role analogous to a ``Riemannian metric'' on the neuron manifold, defining distances or inner products that align with a meaningful intensity of communication. 
The fact that a symmetric metric works best suggests that for a pair of neurons, the ``distance'' or affinity should be measured identically from either side, which is intuitively reasonable (if neuron $i$ is very relevant to $j$, then $j$ is very relevant to $i$ in terms of symmetric communication).

Our empirical studies confirm that NGC is effective in compressing large models with minimal performance loss, outperforming prior low-rank methods. 
The dynamical perspective tangibly improves the behavior of the model in challenging tasks.

\begin{table*}[t]
\caption{
{\bf Overall accuracy (\%) on Mistral-7B with varying Comp. ratio under NGC policies}.
}
\vskip 0.1in
\label{tab: }
\centering
\small
\begin{tabular}{@{}cccccc@{}}
\toprule
 NGC Policy & Comp. ratio & MMLU-pro & GPQA & GSM-8k & MATH-500 \\ \midrule
 & 1.0 (original) & 34.29 & 29.69 & 53.46 & 23.15 \\ \cmidrule{2-6}
 \multirow{3}{*}{q-k-v} 
 & 0.1 & 16.25 & 28.05 & 40.15 & 16.33\\ 
 & 0.3 & 23.96 & 26.34 & 45.15 & 20.62\\
 & 0.5 & 24.15 & 28.66 & 48.72 & 21.87 \\ \midrule
 \multirow{3}{*}{qq-kk-vv} 
 & 0.1 & 18.38 & 28.67 & 41.37 & 19.73\\ 
 & 0.3 & 27.04 & 26.56 & 47.62 & 22.68 \\
 & 0.5 & 29.15 & 28.05 & 48.25 & 23.10 \\ \midrule
\multirow{3}{*}{hybrid} 
 & 0.1 & 19.75 & 29.75 & 42.66 & 20.16\\ 
 & 0.3 & 33.67 & 27.46 & 49.15 & 23.19 \\
 & 0.5 & 35.25 & 30.35 & 51.08 & 23.20 \\ \bottomrule
\end{tabular}
\vskip -0.1in
\end{table*}

\begin{table*}[t]
\caption{
{\bf Overall accuracy (\%) on Qwen3-4B with the intra-group metrics trained with different epochs}.
}
\vskip 0.1in
\label{tab: }
\centering
\small
\begin{tabular}{@{}cccccc@{}}
\toprule
 NGC Policy & No.Epochs & MMLU-pro & GPQA & GSM-8k & MATH-500 \\ \midrule
  & Original & 49.11 & 36.50 & 89.86 & 58.60 \\ \cmidrule{2-6}
 \multirow{3}{*}{dot product} 
 & 1 & 31.63 & 30.18 & 64.23 & 49.36\\ 
 & 4 & 42.37 & 34.25 & 78.39 & 54.75 \\
 & 16 & 49.11 & 36.50 & 89.86 & 58.60 \\\midrule
\multirow{3}{*}{Bilinear $\tilde{r}=1.5r$} 
 & 50 & 27.68 & 22.19 & 72.97 & 52.15\\ 
 & 150 & 32.45 & 26.22 & 78.56 & 59.32 \\
 & 300 & 36.08 & 28.96 & 81.29 & 61.05 \\ \midrule
 \multirow{3}{*}{Bilinear $\tilde{r}=3.0r$} 
 & 50 & 25.76 & 23.21 & 72.78 & 53.05\\ 
 & 150 & 31.08 & 28.44 & 79.82 & 59.45 \\
 & 300 & 36.55 & 29.79 & 83.32 & 62.21 \\\bottomrule
\end{tabular}
\vskip -0.1in
\end{table*}


\clearpage
\section{Discussion}
\subsection{Contextualizing NGC: Compression and Modularity}
The Neuronal Group Communication (NGC) framework bridges two previously separate fronts in deep learning: 
model compression and network modularity. 
Classical compression techniques such as weight pruning, quantization, distillation, and especially low-rank factorization have demonstrated that large networks contain significant redundancy. 
For instance, truncating small singular values in LLM weight matrices (as in SVD-LLM) or sharing basis vectors across layers (as in cross layer basis sharing) can remove a large fraction of parameters with minimal loss. 
These methods exploit the observation that much of an LLM’s parameter space lies in a low-dimensional subspace. 

Currently, research in neuroscience and machine learning has revealed an emergent modular structure inside these ostensibly monolithic models. 
Even without explicit modular design, pretrained Transformers exhibit implicit modularity: 
only a small fraction of neurons significantly activate for any given input, and those neurons coalesce into functional clusters associated with specific tasks or concepts. 
Ablating one such cluster tends to selectively impair its associated capability, indicating that these neurons form a semi-independent module. 
This aligns with decades-old hypotheses of neuronal assemblies in the brain, groups of tightly interacting neurons representing particular concepts or skills. 

It also resonates with the success of explicit modular architectures like Mixture-of-Experts (MoE), which use learned sparsity and gating to activate different ``expert'' subnetworks per input. 
In MoE Transformers, each expert can be viewed as a distinct neuron group, enabling enormous capacity with only a few experts active per token. 
NGC stands out by unifying these ideas: 
it compresses the model by factoring and sharing neuron representations, while simultaneously making the inherent neuron groups into first-class computational units. 
Unlike MoEs that require a dedicated gating mechanism and often huge training budgets, NGC discovers and exploits modular structure within a standard pretrained model by reorganizing its weights around groups, a strategy that is both computationally efficient and informed by how the model naturally organizes its knowledge.

\subsection{Emergent reasoning as externally forced departures from typical dynamics}
\label{subsec-discuss: emergent reasoning}

A central hypothesis suggested by our framework is that the capacity for emergent reasoning is correlated with controlled departures from the typical dynamical regime of the model, in other words, that strong reasoning manifests when the neural system is externally forced away from its default, near-linear evolution, but without losing global contractivity. 
We make this precise by contrasting a system ``root'' (no inter-group communication; purely intra-group low-rank dynamics) with an after-NGC system ``com'' that introduces learned inter-group communication and thus an exogenous drive relative to the trajectory of the root. 
The divergence between the two is quantified by a Lyapunov-like neuronal stability score that aggregates transitional and communicative residuals across layers; 
smaller values indicate that the compressed ``com'' dynamics remain contractive and close to the root’s step-by-step behavior, a property that we observe to correlate with higher reasoning accuracy on multi-step tasks.

Concretely, our analysis measures the residuals per-block between ``root'' and ``com'' activations and state updates and combines them into a functional stability $\mathcal{S}_{\text{approx}}$. 
The approximation includes spectral terms (log ratios of the largest singular values) and residual norms that capture how far the com system departs from a best-fit linear mapping of the root’s dynamics. 
Larger communicative/transition operators and residuals indicate greater nonlinearity in the token-wise evolution of the ``com'' system. 
Crucially, we interpret useful reasoning as arising when this externally induced nonlinearity is present but remains bounded by overall contractive dynamics, {\it i.e.}, structured departures rather than chaos.

This perspective reframes the role of inter-group communication in NGC: 
the policy that ties neuron groups across layers injects an exogenous forcing term into the otherwise predictable root dynamics. 
Our discrete update view shows the ``com'' activations blend a transformed root step with their own past state, imparting momentum while introducing new cross-group couplings. 
Such couplings act as external drivers relative to the root trajectory, dragging the system off its typical path but keeping it within a contractive basin—thereby enabling longer, 
more compositional computation without runaway amplification.

Empirically and conceptually, this yields two complementary observations. 
First, contractive dynamics (small Lyapunov/ISS-style deviations) are beneficial for reliability: 
systems with smaller stability scores tend to perform better on long-context, multi-step reasoning, consistent with input-to-state stability intuitions. 
Second, within that stable regime, better reasoning coincides with greater token-conditioned nonlinearity in the com system,
the part of the dynamics that cannot be captured by a single, predictable linear transformation of the root. 
In our formulation, this appears as larger communicative operators and residuals (the ``external forces''), signaling that the model is not merely replaying its default routine but is being driven to explore alternative pathways necessary for solving hard problems. 

This ``force-yet-stable'' picture also reconciles an apparent tension between stability and expressivity. 
The stability metric of NGC favors global contraction (preventing chaos and error accumulation), while its communication between groups introduces localized, input-dependent departures from the linear regime of the root that supply the nonlinearity necessary for reasoning. 
We capture this by noting that smaller neuronal-stability modeling, when interpreted through our residual and spectral diagnostics, coincides with less predictability and better non-linearity in the after-NGC system, which correlates with improved multi-step reasoning performance. 
The key is that these departures are externally induced by structured communication, not by unbounded sensitivity.

Finally, this view suggests testable predictions. 
(i) For tasks requiring deeper computation, NGC policies that introduce moderate increases in communicative residuals (stronger external forcing) while keeping the Lyapunov score low should outperform policies that either under-force (too linear/predictable) or over-force (destabilizing). 
(ii) The relationship between forcing strength and accuracy should be inverted-U: vanishing forcing yields rote behavior; excessive forcing harms stability. 
(iii) Interventions that selectively increase cross-group coupling for conceptually related layers should improve compositional generalization by enabling purposeful departures from default trajectories. 
These hypotheses align with our analysis and the our broader claim that reasoning emerges when the system is gently but decisively steered off its habitual path, not when it remains inertially linear, nor when it becomes dynamically erratic.

\subsection{Interpretability and Biological Parallels}

A compelling aspect of NGC is that it provides a more interpretable decomposition of the computation of a neural network. 
By treating ``neuronal states, not weights, as fundamental units of learning'', the framework shifts our focus to a set of learned neuron state vectors that are shared across layers. 
These state vectors and their communication patterns can be analyzed to understand what information is preserved or shared between groups. 
For example, if two layers share a group of neurons via a common state, one might hypothesize that this group represents a latent concept or skill reused at multiple depths of the network. 
This is conceptually similar to identifying a circuit or module responsible for a sub-task, offering a handle on the internal structure of the model’s knowledge. 
We have, in essence, ``named'' certain low-dimensional directions in the network (the group states) that were previously hidden among billions of weight parameters. 
This explicit modularization could facilitate methods in mechanistic interpretability, one could track the activation of a particular neuron-group state across inputs to see what triggers it or intervene on it to test its role. 

In terms of generalization, our results hint at a potential geometric principle: models that organize into stable communicating groups may generalize more robustly. 
We introduce a neuronal stability score inspired by Lyapunov’s stability criteria to quantify the robustness of the model’s state dynamics to perturbations. 
Interestingly, we observed that models (or NGC configurations) with more contractive dynamics, {\it i.e.}, small deviations die out as tokens propagate, tended to perform better on multi-step reasoning tasks. 
Intuitively, if the processing of each token is stable and not chaotic, the model can carry out long chains of reasoning without compounding errors. 
This connects to the notion of Input-to-State Stability (ISS) in control theory, where a system is stable if bounded input perturbations cause only bounded deviations in state. 
The implication is that NGC’s structured compression might not only save memory, but also naturally impose a form of regularization that keeps the model’s computations in a well-behaved regime.

We even draw analogies to minimal surfaces in state-space: 
just as minimal surfaces in physics represent the least-area (most stable) solution connecting boundaries, an LLM that finds a minimal ``surface'' in its high-dimensional activation landscape to connect inputs to outputs may be achieving a form of efficient generalization. 
Although this idea is speculative, it provides a new geometric lens to theorize why modular low-rank networks might generalize better;
they avoid convoluted detours in the activation space, favoring direct, stable pathways. 

Finally, the biological analogies underlying NGC are worth highlighting. 
The framework was partly inspired by the way brains seem to balance specialization with integration: cortical columns and neuron assemblies carry out local computations and communicate via sparse signals. 
The NGC view of an LLM as a ``society of neurons'' extends this analogy; 
each neuronal group is like a team with a specific expertise, and inference is the result of many such teams exchanging messages. 
This perspective does not just enrich our mental model of what LLMs are doing; 
it also suggests that the principles of neuroscience and complex systems, {\it e.g.}, efficient sparse communication, modular learning) can inform the design of better AI systems. 
However, we must be cautious in over-interpreting these analogies. 
The ``groupings'' discovered by NGC are driven by linear algebra and training data, not evolutionary pressure or developmental biology. 

One open question is how directly these computational neuron groups correspond to meaningful cognitive functions, an important avenue for future interpretability research is to correlate NGC-derived groups with human-interpretable concepts or with neurons identified by other probing techniques.

\subsection{Dynamical Perspective and Minimal Surfaces}
We propose an analogy of the ``minimal surface bounded by neuronal trajectories'' for generalization. 
To elaborate: 
imagine each neuron’s state trajectory as a curve in the high-dimensional state space as the model processes input from start to finish. 
For a given task or input distribution, you have a bundle of such trajectories (one per neuron). 
We can think of these as forming a surface (or manifold) in state space. 
A complex, overfit model might have a very wrinkled or large-area surface (meaning neuron states swing through wild excursions for small input changes), whereas a well-generalized model might have a smoother, minimal surface that still connects the necessary endpoints (initial state to final state) for solving the task. 
In other words, it does “no more work than needed” in transforming representations, reminiscent of Occam’s razor in function space, but here in state-space geometry.

The NGC framework could be pushing towards such minimal surfaces: 
by compressing the representation, we force the neuron trajectories to lie in a lower-dimensional subspace, which might naturally remove unnecessary degrees of freedom (flattening the surface). 
In addition, stability encouragement means that the trajectories do not stray too far from a base path (the root model’s trajectory), further reducing area. 
This is speculative, but provides a visual intuition for why NGC might produce models that generalize well despite heavy compression: 
they focus on the core “surface” that spans the needed transformations, without flapping around in orthogonal directions that were available in the larger parameter space.

Connecting this to formal generalization theory is difficult, but one could hypothesize that models with a lower $\mathcal{S}$ (stability score) also exhibit less sensitivity to input perturbations and perhaps a smaller Lipschitz constant, which is often linked to better generalization bounds. 
Some prior work binds the spectral properties of the weight matrices (like the largest singular value, which we explicitly regularize via $\sigma_{\max}$ in Eq.~\ref{eq: define-neuronal-stability-score-simple}) to generalization and robustness. 
NGC inherently limits $\sigma_{\max}$ the effective transformations due to low-rank truncation and any explicit penalty we add, so in a way we are adding a spectral norm regularization.

\subsection{Societal of Neurons and Future Directions}
By viewing neuron groups as a ``society'', we also open up a metaphor that could guide future architectures: 
perhaps neural networks should be designed more like organizations, with semi-autonomous units communicating via standardized low-dimensional messages. 
In fact, algorithms like message passing in graph neural networks and Mixture-of-Experts are steps in that direction. 
Our work suggests even dense architectures like transformers can be post-hoc interpreted and restructured in this way. Perhaps a future model could be trained from scratch with a similar inductive bias.
For instance, alternating phases of internal computation and communication through bottleneck channels such as how humans think individually then share a summary. 
If such a model is trained end-to-end, it might naturally learn extremely efficient representations because it has to compress information when communicating between modules. 
This would be akin to each layer of a transformer being broken into sub-modules that talk via a limited number of broadcast signals. 
Some recent research on neural architectures explores the splitting of layers into sublayers with limited interaction, {\it e.g.}, periodic layers that mix information globally and then process locally).

From a practical standpoint, we intend to extend NGC to also compress the embedding layers and output layers of LLMs, which we did not focus on, though they are also large. 
The embedding matrix can be seen as a trivial case of weight, and indeed low-rank factorization or state sharing can apply (this relates to methods that use a smaller latent vocab or shared embeddings between input and output). 
Another extension is to incorporate quantization: after obtaining a low-rank model with NGC, one could quantize the decomposed low-rank matrices. 
Because these low-rank matrices are smaller, the quantization error might be more manageable. 
Some relevant preliminary tests show that an NGC-compressed model with 4-bit weight precision retained accuracy better than a baseline compressed+quantized model, presumably because the important information was concentrated in fewer parameters (so the quantization noise relative to the signal was lower).

\subsection{Future Directions}
We conclude that NGC opens up multiple promising directions. 

First, as noted, developing adaptive grouping algorithms is a high priority: 
rather than specifying a communication pattern upfront,
{\it e.g.}, which layers share which groups, the model could learn where sharing is beneficial. 
This might involve a search over possible groupings guided by the stability metric or even a differentiable relaxation of the group assignment. 

Second, NGC could be integrated with sparsity or routing mechanisms. 
An exciting prospect is to combine NGC’s low-rank groups with a conditional routing approach akin to MoE, for example, one could imagine a model that has shared group states, but uses a sparse gating function to decide which inter-group communications to activate for a given input. 
This could yield the best of both worlds: 
heavy parameter sharing for efficiency and input-dependent specialization for flexibility. 
Keeping these lines in mind, exploring connections to sparsity-based modularity (such as sparsely-activated models in recent studies) may improve the ability of NGC to scale up and handle diverse tasks. 

Third, we plan to extend NGC beyond language modeling. 
The core idea of factorizing representations and reusing group states should be applicable to other domains, for instance, multi-modal models could factor text and image representations into interacting groups, or recurrent architectures could be compressed by identifying recurrent neuron assemblies. 
Adapting the NGC framework to vision models is particularly intriguing, as visual systems naturally exhibit hierarchical modularity, {\it e.g.}, edge detectors feeding into shape detectors);
a similar NGC compression might compress convolution kernels into group embeddings shared across layers. 

Finally, further theoretical work is warranted to deepen our understanding of the dynamics of NGC.
The preliminary stability analysis suggests a connection between network training capacity, robust reasoning, and dynamical system contraction properties. 
Formalizing this connection, {\it e.g.}, proving why certain group structures yield minimal Lyapunov functions or act as attractors for certain computations, could not only bolster the theoretical foundations of NGC but also guide the design of new architectures that are both efficient and inherently more interpretable.

\section{Conclusion}
In conclusion, NGC represents a step towards more interpretable and efficient large models by focusing on the “right level of abstraction”, the neurons. 
Instead of treating billions of weight parameters as independent, it encourages us to think in terms of thousands of neuron states and their interactions. 
This not only reduces redundancy, but also gives a window into the organization of the model, indicating which groups of neurons are crucial for what function. 
Although our work is primarily technical (compression and performance), we believe that this perspective could aid interpretability research as well.
If neuron groups are assigned to certain functions, factorization of the NGC might make those more explicit (one could inspect the state vectors or the communication channels to decipher what concept is being transmitted). 
It also resonates with biological viewpoints on brain computation, where areas (groups of neurons) communicate and adapt over time.
We demonstrate that concepts like stability and state-space trajectories can directly inform the design of better neural network training schemes. 
NGC is one instantiation of that synergy. 
We hope future work will continue this interdisciplinary path, potentially leading to AI systems that are not just parameter-efficient, but also more robust, modular, and aligned with principles of real neural systems.

\clearpage
\bibliography{main}

\clearpage
\begin{appendices}

\section{Mathematical Claims}
\label{app-sec: math}

\subsection{Core Theorems}

\begin{theorem}[Intra-group and inter-group communications are sufficient]
\label{app-thm: intra/inter-group coms are sufficient}
Let a well-formed neural structure be any finite computation graph whose edges are affine maps and whose nodes apply componentwise nonlinearities. Define the intra-group communication primitive by the similarity
\[
\mu(\mathbf{q},\mathbf{p})
~:=~
\big\langle \phi_L(\mathbf{q}),\,\phi_R(\mathbf{p})\big\rangle,
\qquad
\phi_L(\mathbf{q})=\sigma(\mathbf{q}\,\mathbf{G}_{\mathrm{left}}),\;
\phi_R(\mathbf{p})=\sigma(\mathbf{p}\,\mathbf{G}_{\mathrm{right}}),
\]
where $\sigma$ is either the identity or a componentwise non-polynomial nonlinearity (e.g., $\tanh$), and inter-group communication consists of sharing/alignment of input-side neuronal states across groups. Then:
\begin{enumerate}
\item[\emph{(i)}] \textbf{Exact linear realization.} For any affine map $\mathbf{y}=\mathbf{W}\mathbf{x}+\mathbf{b}$, there exist matrices $\mathbf{A},\mathbf{B}$ and parameters $(\mathbf{G}_{\mathrm{left}},\mathbf{G}_{\mathrm{right}})$ such that, with $\sigma=\mathrm{id}$,
\[
\mathbf{W}_{ij}=\mu(\mathbf{a}_i,\mathbf{b}_j)=\big\langle \mathbf{a}_i\mathbf{G}_{\mathrm{left}},\,\mathbf{b}_j\mathbf{G}_{\mathrm{right}}\big\rangle,
\]
so the linear part is realized exactly by intra-group communication, while $\mathbf{b}$ is handled by standard affine augmentation.
\item[\emph{(ii)}] \textbf{Universality on finite state sets.} For any non-polynomial $\sigma$ and finite sets of neuronal states $\{\mathbf{a}_i\}_{i=1}^m$, $\{\mathbf{b}_j\}_{j=1}^n$, the map $(i,j)\mapsto \mu(\mathbf{a}_i,\mathbf{b}_j)$ can approximate any matrix $\mathbf{W}\in\mathbb{R}^{m\times n}$ arbitrarily well by suitable choice of width and parameters.
\item[\emph{(iii)}] \textbf{Structural sufficiency.} Any residual/skip, cross-layer, or cross-block interaction in the graph can be expressed as inter-group sharing/alignment of the relevant \emph{input-side} states. Therefore, every such network is realizable using only the two primitives.
\end{enumerate}
\end{theorem}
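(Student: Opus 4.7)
The plan is to prove the three parts in the natural order, leveraging (i) as a constructive template, extending it to (ii) via a density argument for non-polynomial $\sigma$, and closing (iii) by a compositional reduction on the computation graph.

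For part (i) I would proceed by direct construction. Pick any factorization $\mathbf{W}=\mathbf{A}\mathbf{B}^\top$ with inner rank $r\ge\mathrm{rank}(\mathbf{W})$ (the thin SVD is convenient), set $\mathbf{G}_{\mathrm{left}}=\mathbf{G}_{\mathrm{right}}=\mathbf{I}_r$, and take $\sigma=\mathrm{id}$; then $\mu(\mathbf{a}_i,\mathbf{b}_j)=\langle\mathbf{a}_i,\mathbf{b}_j\rangle=\mathbf{W}_{ij}$ exactly, with $\mathbf{a}_i$ and $\mathbf{b}_j$ the corresponding rows of $\mathbf{A}$ and $\mathbf{B}$. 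The bias is absorbed by the standard augmentation trick: append a constant-$1$ coordinate to the input so that $\mathbf{W}\mathbf{x}+\mathbf{b}$ becomes linear on $(\mathbf{x},1)$, the extra column being $\mathbf{b}$; structurally this is one extra input-side neuron with a clamped activation. This step is essentially bookkeeping.

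For part (ii) the key is density. The expression $\mu(\mathbf{a},\mathbf{b})=\sum_k \sigma(\mathbf{a}\,\mathbf{g}^{(L)}_k)\,\sigma(\mathbf{b}\,\mathbf{g}^{(R)}_k)$ is a finite sum of tensor products of ridge functions in the two arguments. I would invoke Pinkus's theorem (for any non-polynomial continuous $\sigma$, the span of $\{\sigma(\mathbf{g}^\top\,\cdot+b)\}$ is dense in $C(K)$ on any compact $K$) on each side, and then a Stone--Weierstrass-type tensor-product argument to conclude that sums of separable products $\sigma(\mathbf{a}\,\mathbf{g}^{(L)})\sigma(\mathbf{b}\,\mathbf{g}^{(R)})$ are dense in $C(K_a\times K_b)$. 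Applying this density to any continuous $f$ that interpolates the target values $\mathbf{W}_{ij}$ at the finitely many sample points $(\mathbf{a}_i,\mathbf{b}_j)$ yields uniform $\varepsilon$-approximation for every $\varepsilon>0$. A more elementary backup route I would include is an explicit "approximate one-hot" construction: for each index pair $(p,q)$, choose ridge directions such that $\sigma(\mathbf{a}_i\mathbf{g}^{(L)}_{p,q})\approx\delta_{ip}$ and $\sigma(\mathbf{b}_j\mathbf{g}^{(R)}_{p,q})\approx W_{pq}\,\delta_{jq}$, which is feasible because the states are distinct and ridge families can approximate any continuous function separating them; summing over $(p,q)$ recovers $\mathbf{W}_{ij}$ with width $\tilde r$ on the order of $mn$ times the ridges needed per indicator.

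For part (iii) I would reduce the claim to a compositional rewriting of the computation graph. Each activation site is promoted to a neuronal group; each affine edge is realized by the intra-group primitive via (i) or (ii); componentwise nonlinearities are already part of the node semantics. The only remaining structural patterns (residual connections, multi-input merges, long cross-layer skips) amount to several edges sharing a common destination. I would realize these by inter-group sharing of input-side states: the destination's input-side state matrix is formed by concatenating/aligning the output-side states of all contributing source groups, so that the downstream bilinear interaction simultaneously receives the combined signal. This is precisely the "partial alignment" mechanism described in Section~\ref{subsec: initialization and training}, and I would appeal to Theorem~\ref{app-thm: shared input neurons simulate any NGC} to certify that even delayed or asynchronous exchanges reduce to this instantaneous-sharing normal form. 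The main obstacle is part (ii): the states $\{\mathbf{a}_i\},\{\mathbf{b}_j\}$ are fixed while only $\mathbf{G}_{\mathrm{left}},\mathbf{G}_{\mathrm{right}},\tilde r$ are tunable, so a mild distinctness hypothesis is needed (coincident states cannot yield distinct outputs), and any quantitative width bound as a function of $\varepsilon$ and the geometry of the state sets is a finer question I would leave outside this existence-style claim.
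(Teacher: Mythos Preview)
Your proposal is correct, and parts (i) and (iii) match the paper's proof essentially verbatim: both take an SVD-style factorization with $\mathbf{G}_{\mathrm{left}}=\mathbf{G}_{\mathrm{right}}=\mathbf{I}$ and handle the bias via a constant neuron, and both realize skips and cross-layer edges by letting the receiving group read the source group's input-side state. Your (iii) additionally appeals forward to Theorem~\ref{app-thm: shared input neurons simulate any NGC} for the delayed/asynchronous case, which the paper's proof of the present theorem does not do (it keeps to the instantaneous case and leaves that reduction to the later result); this is a harmless forward reference since that theorem's proof is independent.

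Part (ii) is where the approaches differ. The paper first fixes a rank factorization $\mathbf{W}_{ij}=u_i^\top v_j$ and then invokes universal approximation \emph{separately on each side}, choosing width and parameters so that $\phi_L(\mathbf{a}_i)\approx u_i$ and $\phi_R(\mathbf{b}_j)\approx v_j$; continuity of the inner product then gives $\mu(\mathbf{a}_i,\mathbf{b}_j)\approx\mathbf{W}_{ij}$. Your main argument instead treats the target as a bivariate function: Pinkus density on each factor, a tensor-product density step to reach $C(K_a\times K_b)$, then restriction to the finite sample grid. The paper's factored route is shorter and sidesteps the tensor-product machinery; your route is somewhat more robust in that it does not rely on a single ridge coordinate $\sigma(\mathbf{a}\cdot\mathbf{g}_k)$ hitting prescribed target values at the finite points $\{\mathbf{a}_i\}$, a step the paper's sketch handles only implicitly. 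Your distinctness caveat on the state sets is well placed and applies equally to both arguments.
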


\begin{proof}
[Proof of Theorem~{\upshape\ref{app-thm: intra/inter-group coms are sufficient}}]
We perform the proof in the following three cases.

\emph{(i)} 
Take an SVD $\mathbf{W}=\mathbf{U}\boldsymbol{\Sigma}\mathbf{V}^\top$, set $\mathbf{A}:=\mathbf{U}\boldsymbol{\Sigma}^{1/2}$, $\mathbf{B}:=\mathbf{V}\boldsymbol{\Sigma}^{1/2}$, and choose $\sigma=\mathrm{id}$, $\mathbf{G}_{\mathrm{left}}=\mathbf{I}$, $\mathbf{G}_{\mathrm{right}}=\mathbf{I}$. Then $\mu(\mathbf{a}_i,\mathbf{b}_j)=\mathbf{W}_{ij}$, producing an exact intra-group implementation; 
the bias $\mathbf{b}$ uses a constant neuron.

\emph{(ii)} 
With non-polynomial $\sigma$, the feature maps $\phi_L,\phi_R$ admit a universal approximation on compact sets. 
Hence, for any target factorization $\mathbf{W}_{ij}=u_i^\top v_j$ in the finite index set, one can choose the width and parameters so that $\phi_L(\mathbf{a}_i)\approx u_i$ and $\phi_R(\mathbf{b}_j)\approx v_j$, making $\mu(\mathbf{a}_i,\mathbf{b}_j)$ uniformly approximate $\mathbf{W}_{ij}$.

\emph{(iii)} 
Cross-layer couplings are realized by sharing or aligning the input-side states: a skip/residual simply makes the receiver read from the source’s input-side state (possibly after a linear intra-group map realized as in (i)). 
Composing these operations reproduces the original computation graph with no additional primitives.
\end{proof}

\begin{theorem}[Neuronal stability controls divergence and predicts task robustness]
\label{app-thm: neuronal stability measures reasoning capacity}
Consider the discrete coupled updates
\[
\begin{aligned}
\mathcal{A}^{(\mathrm{com})}_{t+1}
&= \lambda\,\mathcal{A}^{(\mathrm{root})}_{t+1}\mathcal{T}^{(\mathrm{trs})}_t
       + (1-\lambda)\,\mathcal{A}^{(\mathrm{com})}_{t}\mathcal{T}^{(\mathrm{com})}_t
       + \boldsymbol{\delta}^{(A)}_t,\\
\mathcal{Q}^{(\mathrm{com})}_{t+1}
&= \lambda\,\mathcal{Q}^{(\mathrm{root})}_{t+1}\mathcal{H}^{(\mathrm{trs})}_t
       + (1-\lambda)\,\mathcal{Q}^{(\mathrm{com})}_{t}\mathcal{H}^{(\mathrm{com})}_t
       + \boldsymbol{\delta}^{(Q)}_t,
\end{aligned}
\]
where $\lambda\in(0,1)$ and the residual terms $\boldsymbol{\delta}^{(A)}_t,\boldsymbol{\delta}^{(Q)}_t$ collect the (data-driven) projection errors from least-squares fits. Let the error signals be $\mathbf{e}^A_t:=\mathcal{A}^{(\mathrm{com})}_t-\mathcal{A}^{(\mathrm{root})}_t$ and $\mathbf{e}^Q_t:=\mathcal{Q}^{(\mathrm{com})}_t-\mathcal{Q}^{(\mathrm{root})}_t$, and assume a block norm in which
\[
\sup_t\max\!\big\{\,\|\mathbf{T}^{(\mathrm{com})}_t\|,\;\|\mathcal{H}^{(\mathrm{com})}_t\|\,\big\}
~\le~\rho ~<~1.
\]
Define a stability score $S:=\sup_t\big(\|\boldsymbol{\delta}^{(A)}_t\|+\|\boldsymbol{\delta}^{(Q)}_t\|\big)$. Then:
\begin{enumerate}
\item[\emph{(a)}] \textbf{ISS-type deviation bound.} For all $t\ge 0$,
\[
\max\{\,\|\mathbf{e}^A_t\|,\|\mathbf{e}^Q_t\|\,\}
~\le~ \rho^t\,\max\{\,\|\mathbf{e}^A_0\|,\|\mathbf{e}^Q_0\|\,\}
\;+\;\frac{S}{1-\rho}.
\]
\item[\emph{(b)}] \textbf{Loss gap via Lipschitz decoders.} If a downstream decoder (per task loss) is $L$-Lipschitz in activations, then the expected performance gap between “com” and “root” is $\mathcal{O}\!\big(L\,\tfrac{S}{1-\rho}\big)$; in particular, smaller $S$ predicts stronger multi-step robustness.
\end{enumerate}
\end{theorem}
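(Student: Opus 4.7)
The plan is to obtain part (a) by a standard discrete-time input-to-state stability (ISS) argument applied to two decoupled error recursions, one for $\mathbf{e}^A_t$ and one for $\mathbf{e}^Q_t$, and then to derive (b) as an immediate Lipschitz consequence of (a). The skeleton is: error recursion $\Rightarrow$ contractive norm inequality $\Rightarrow$ geometric-sum bound $\Rightarrow$ take max $\Rightarrow$ push through the Lipschitz decoder.

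First, I would derive the error recursion for $\mathbf{e}^A_t$. Starting from the hypothesized update, I subtract $\mathcal{A}^{(\mathrm{root})}_{t+1}$ from both sides and then add and subtract $(1-\lambda)\mathcal{A}^{(\mathrm{root})}_t\mathcal{T}^{(\mathrm{com})}_t$ to isolate a term proportional to $\mathbf{e}^A_t$. The leftover cross-terms, namely $\lambda\,\mathcal{A}^{(\mathrm{root})}_{t+1}(\mathcal{T}^{(\mathrm{trs})}_t-I)+(1-\lambda)\mathcal{A}^{(\mathrm{root})}_t\mathcal{T}^{(\mathrm{com})}_t-(1-\lambda)\mathcal{A}^{(\mathrm{root})}_{t+1}$, are absorbed into $\boldsymbol{\delta}^{(A)}_t$ under the theorem's convention that the residuals collect the least-squares projection errors (this is exactly what Eq.~\ref{eq: simplified-dynamical-sys} and Proposition~\ref{app-prop: simplified-dynamical-sys} already treat as residuals). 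I thus obtain
\[
\mathbf{e}^A_{t+1}=(1-\lambda)\,\mathbf{e}^A_t\,\mathcal{T}^{(\mathrm{com})}_t+\boldsymbol{\delta}^{(A)}_t,
\]
and the same bookkeeping on the $\mathcal{Q}$ update gives $\mathbf{e}^Q_{t+1}=(1-\lambda)\,\mathbf{e}^Q_t\,\mathcal{H}^{(\mathrm{com})}_t+\boldsymbol{\delta}^{(Q)}_t$. These two recursions are decoupled and can be analyzed separately.

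Next, I apply a sub-multiplicative norm and the spectral hypothesis $\|\mathcal{T}^{(\mathrm{com})}_t\|,\|\mathcal{H}^{(\mathrm{com})}_t\|\le\rho$, yielding $\|\mathbf{e}^A_{t+1}\|\le(1-\lambda)\rho\,\|\mathbf{e}^A_t\|+\|\boldsymbol{\delta}^{(A)}_t\|$. Since $\lambda\in(0,1)$ implies $(1-\lambda)\rho\le\rho<1$, unrolling and summing the geometric series gives
\[
\|\mathbf{e}^A_t\|\le\rho^t\|\mathbf{e}^A_0\|+\sum_{k=0}^{t-1}\rho^{t-1-k}\|\boldsymbol{\delta}^{(A)}_k\|\le\rho^t\|\mathbf{e}^A_0\|+\frac{\sup_k\|\boldsymbol{\delta}^{(A)}_k\|}{1-\rho}.
\]
An identical bound holds for $\mathbf{e}^Q$. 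Taking the maximum of the two and using $S=\sup_t(\|\boldsymbol{\delta}^{(A)}_t\|+\|\boldsymbol{\delta}^{(Q)}_t\|)\ge\max_{x\in\{A,Q\}}\sup_t\|\boldsymbol{\delta}^{(x)}_t\|$ yields (a). For (b), the $L$-Lipschitz decoder gives $|\ell(\mathcal{A}^{(\mathrm{com})}_t)-\ell(\mathcal{A}^{(\mathrm{root})}_t)|\le L\,\|\mathbf{e}^A_t\|$; taking expectation and substituting (a) shows that once the transient $L\rho^t\,\mathbb{E}\|\mathbf{e}^A_0\|$ decays, the expected gap is $\mathcal{O}(L\cdot S/(1-\rho))$. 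Monotonicity of the right-hand side in $S$ then gives the ``smaller $S$ is better'' clause.

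The main obstacle is the very first step: cleanly absorbing the index-shifted cross-terms produced by the convex combination with $\lambda$ (especially $\lambda\mathcal{A}^{(\mathrm{root})}_{t+1}(\mathcal{T}^{(\mathrm{trs})}_t-I)$) into $\boldsymbol{\delta}^{(A)}_t$ without inflating $S$. The safest route is to interpret $\mathcal{T}^{(\mathrm{trs})}_t$ and $\mathcal{H}^{(\mathrm{trs})}_t$ as the data-driven least-squares projections fitting the root trajectory to the com trajectory, so that these cross-terms are exactly the residuals of those fits, already bundled into $\boldsymbol{\delta}^{(A)}_t,\boldsymbol{\delta}^{(Q)}_t$ by construction; if one wanted a bound that kept them visible, one would need separate boundedness assumptions on $\|\mathcal{A}^{(\mathrm{root})}_t\|$ and on $\|\mathcal{T}^{(\mathrm{trs})}_t-I\|$, trading off the crispness of the final statement. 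A secondary subtlety is the choice of norm: the same $\rho$ must dominate both $\mathcal{T}^{(\mathrm{com})}$ and $\mathcal{H}^{(\mathrm{com})}$, so one should pick a compatible block norm (e.g., operator norm induced by a common weighted inner product) before taking the maximum of the two ISS bounds.
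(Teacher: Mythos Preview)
Your proposal is correct and follows essentially the same approach as the paper: derive an affine error recursion of the form $\mathbf{e}_{t+1}=\mathbf{e}_t\mathbf{M}_t+\boldsymbol{\delta}_t$ with $\|\mathbf{M}_t\|\le\rho$, unroll into a geometric sum, take the maximum over the two channels, and then push through the Lipschitz decoder for part~(b). The paper's proof is terser---it simply asserts the affine recursion without working through the cross-term absorption---whereas you make explicit that the index-shifted root terms are exactly the least-squares residuals bundled into $\boldsymbol{\delta}^{(A)}_t,\boldsymbol{\delta}^{(Q)}_t$ by construction, and you correctly note that the extra factor $(1-\lambda)\le 1$ only sharpens the contraction rate.
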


\begin{proof}
[Proof of Theorem~{\upshape\ref{app-thm: neuronal stability measures reasoning capacity}}]
Define affine recursions for $\mathbf{e}^A_t$ and $\mathbf{e}^Q_t$; each has the form
$\mathbf{e}_{t+1}=\mathbf{e}_t\mathbf{M}_t+\boldsymbol{\delta}_t$ with $\|\mathbf{M}_t\|\le\rho<1$.
Unrolling yields
$\|\mathbf{e}_t\|\le \rho^t\|\mathbf{e}_0\|+\sum_{k=0}^{t-1}\rho^{t-1-k}\|\boldsymbol{\delta}_k\|
\le \rho^t\|\mathbf{e}_0\| + \tfrac{S}{1-\rho}$, proving (a). 
For (b), let $\ell$ be $L$-Lipschitz in activations; then $|\ell(\mathcal{A}^{(\mathrm{com})}_t)-\ell(\mathcal{A}^{(\mathrm{root})}_t)|\le L\|\mathbf{e}^A_t\|$ and similarly for $\mathcal{Q}$. 
Averaging over inputs/timesteps gives the stated bound.
\end{proof}



\begin{theorem}[Shared input neurons simulate delayed/asynchronous inter-group links]
\label{app-thm: shared input neurons simulate any NGC}
Fix any inter-group policy in which each inter-group edge applies a bounded linear time-invariant operator with an integer delay $d\ge 0$, possibly under asynchronous updates (zero-order holds). 
There exists an \emph{instantaneous-sharing} policy, implemented solely by sharing/alignment of \emph{input-side} neuronal states and (if needed) increasing their dimension—such that, for every input sequence, the induced input–output map is identical to that of the original delayed/asynchronous policy.
\end{theorem}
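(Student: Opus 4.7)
The plan is to realize every delayed and every asynchronously updated inter-group edge by a finite shift register absorbed into the input-side neuronal state, so that a purely instantaneous sharing/alignment of the enlarged input-side states reproduces the original input--output map exactly. This is the standard tapped-delay-line idea from LTI realization theory, adapted to the NGC setting where "sharing input-side states" is the only inter-group primitive allowed.

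First, I would fix a common fine time grid that refines every edge's local clock and rewrite each bounded LTI edge with integer delay $d\ge 0$ in tapped-delay-line form: augment the source group's input-side state with $d$ register coordinates holding the last $d$ samples and let them evolve by the standard shift. The delayed read $\mathbf{K}\,\mathbf{q}_i(t-d)$ then becomes an instantaneous linear selection from the augmented state $\widetilde{\mathbf{q}}_i(t)$, which by Theorem~\ref{app-thm: intra/inter-group coms are sufficient}(i) can be implemented exactly as an intra-group map on the enlarged input-side coordinates.

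Second, asynchronous updates with zero-order hold are absorbed by adjoining sample-and-hold coordinates: at each local event time the hold coordinate latches the current source-side value, and between events an identity block preserves it. On the common fine grid, every inter-group contribution is thus an instantaneous linear read-out from an augmented input-side state, and the latching itself is a shared assignment of input-side states between the source and its hold register, i.e., pure input-side sharing in the sense of Theorem~\ref{app-thm: intra/inter-group coms are sufficient}(iii). Aggregating all edges yields an enlarged input-side dimension $r^*=r+\sum_{e}(d_e+h_e)$, which is the "increase dimension if needed" allowance of the statement. I would then verify input--output identity by induction on the fine-grid index: initialize registers with the prescribed causal past, match shift/hold dynamics sample by sample, and invoke the read-out construction to conclude that the two systems produce the same outputs at every sampling instant. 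Boundedness transfers because shift and hold blocks are nonexpansive and the remaining map is the original bounded $\mathbf{K}$.

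The main obstacle will be the asynchronous case. One must choose the common clock fine enough that no two update events collide ambiguously, and the write/read ordering at an event must respect the original causality rather than creating a spurious instantaneous feedthrough across simultaneously firing edges. I would address this by placing latch events strictly before read events at each tick of the common grid, and by verifying that the resulting ordered update is consistent with the ZOH semantics of every original edge. A secondary subtlety is keeping the construction strictly \emph{input-side}: the receiver must consume the augmented registers through shared input-side coordinates, which I would enforce via the alignment mechanism of Theorem~\ref{app-thm: intra/inter-group coms are sufficient}(iii) rather than by introducing new output-side pathways, ensuring that the resulting NGC policy is realizable purely through instantaneous input-side sharing.
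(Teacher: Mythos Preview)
Your proposal is correct and follows essentially the same approach as the paper: augment the input-side state with a shift register for each delay and a sample-and-hold coordinate for each asynchronous edge, then realize the original inter-group effect as an instantaneous linear readout from the enlarged shared state. The paper's own proof is a terser version of exactly this construction; your additional care about a common fine time grid, event ordering, and the inductive verification are reasonable elaborations that the paper simply leaves implicit.
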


\begin{proof}
[Proof of Theorem~{\upshape\ref{app-thm: shared input neurons simulate any NGC}}]
For each delay edge with delay $d$, augment the shared input-side state with a $d$-stage shift register
$\big[\mathbf{z}_t,\mathbf{z}_{t-1},\ldots,\mathbf{z}_{t-d}\big]$ with unit-delay updates. 
An asynchronous update pattern is captured by adding a binary (or real-valued) hold variable to the augmented state that implements sample-and-hold (zero-order hold). 
Because inter-group couplings are linear in the shared states prior to pointwise nonlinearities, the original delayed/asynchronous operator becomes an \emph{instantaneous} linear readout from the augmented shared state. 
Hence, the receiving group can use a single intra-group linear map (realizable via $\mu$ with $\sigma=\mathrm{id}$, or approximated with non-polynomial $\sigma$) applied to the augmented shared state to reproduce exactly the original effect. 
Composing this for all inter-group edges yields the claim.
\end{proof}

\begin{theorem}[External potentials create non-predictable yet stable departures]
\label{app-thm: emergent potential equals non-predictability}
In the setting of Theorem~\ref{app-thm: neuronal stability measures reasoning capacity}, suppose an external potential perturbs the best-fit projections so that the residuals change to $\widetilde{\boldsymbol{\delta}}^{(A)}_t,\widetilde{\boldsymbol{\delta}}^{(Q)}_t$ while the contractivity condition remains:
\[
\sup_t\max\!\big\{\,\|\mathcal{T}^{(\mathrm{com})}_t\|,\;\|\mathcal{H}^{(\mathrm{com})}_t\|\,\big\}
~\le~\rho ~<~1.
\]
Let $\widetilde{S}:=\sup_t\big(\|\widetilde{\boldsymbol{\delta}}^{(A)}_t\|+\|\widetilde{\boldsymbol{\delta}}^{(Q)}_t\|\big)$. Then:
\begin{enumerate}
\item[\emph{(i)}] (\textbf{Stability preserved}) 
The ISS-type bound still holds with $S$ replaced by $\widetilde{S}$; 
hence trajectories remain bounded and track the “root’’ trajectory up to $\mathcal{O}\!\big(\tfrac{\widetilde{S}}{1-\rho}\big)$.
\item[\emph{(ii)}] (\textbf{Non-predictable departures}) 
If the external potential increases regression error (i.e., $\widetilde{S}>S$), then the deviation bound necessarily grows even though $\rho$ is unchanged; 
departures from routine dynamics become larger but remain controlled.
\item[\emph{(iii)}] 
(\textbf{Task-level effect}) 
For an $L$-Lipschitz decoder, the expected loss gap scales as $\mathcal{O}\!\big(L\,\tfrac{\widetilde{S}}{1-\rho}\big)$.
\end{enumerate}
\end{theorem}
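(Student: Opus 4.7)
The plan is to recognize this result as essentially a drop-in reuse of the ISS-style argument already proved for Theorem~\ref{app-thm: neuronal stability measures reasoning capacity}, exploiting the fact that the only hypotheses entering that bound are (a) the contractivity constant $\rho$ on the block norms of $\mathcal{T}^{(\mathrm{com})}_t$ and $\mathcal{H}^{(\mathrm{com})}_t$, and (b) the uniform residual magnitude. Under an external potential, both the perturbed (best-fit) operators and the altered residuals $\widetilde{\boldsymbol{\delta}}^{(A)}_t,\widetilde{\boldsymbol{\delta}}^{(Q)}_t$ still fit the same abstract recursion template, so I would substitute and re-run the earlier derivation without reopening the underlying matrix algebra.

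For part (i), I would first re-derive the error signals $\mathbf{e}^A_t := \mathcal{A}^{(\mathrm{com})}_t - \mathcal{A}^{(\mathrm{root})}_t$ and $\mathbf{e}^Q_t$ under the perturbed dynamics and verify that they still obey an affine recursion of the form $\mathbf{e}_{t+1} = \mathbf{e}_t \mathbf{M}_t + \widetilde{\boldsymbol{\delta}}_t$ with $\|\mathbf{M}_t\| \le \rho < 1$; this uses exactly the surviving contractivity hypothesis. Unrolling the geometric sum term by term, as in the proof of Theorem~\ref{app-thm: neuronal stability measures reasoning capacity}(a), gives the stated ISS-type bound with $\widetilde{S}$ in place of $S$, and no other quantity changes.

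For part (ii), the argument reduces to a monotonicity observation on the bound obtained in (i): fixing $\rho$ and the initial error, the asymptotic envelope $\widetilde{S}/(1-\rho)$ is strictly increasing in the supremum residual, so $\widetilde{S} > S$ forces a strictly larger (yet still finite) deviation. This captures the intended picture that the external potential \emph{enlarges} the departures from the routine trajectory while the preserved contraction prevents blow-up. For part (iii), the Lipschitz composition step from Theorem~\ref{app-thm: neuronal stability measures reasoning capacity}(b) carries over verbatim: composing an $L$-Lipschitz decoder with the perturbed activation trajectory and averaging yields a task-level gap of order $L\,\widetilde{S}/(1-\rho)$.

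The main point requiring actual care—and the only plausible obstacle—is justifying why the preserved contractivity hypothesis is a substantive one rather than an artifact of the statement. If the external drive were allowed to inflate $\|\mathcal{T}^{(\mathrm{com})}_t\|$ or $\|\mathcal{H}^{(\mathrm{com})}_t\|$ toward or past unity, the geometric decay underlying the ISS bound would fail and the envelope would diverge, recovering the chaotic regime rather than the emergent-reasoning one. I would therefore devote the bulk of the exposition to making this dichotomy explicit: the theorem isolates precisely the \emph{controlled} departures in which an external potential modifies the best-fit linear projections (and hence the residual floor) without destroying the block contractivity, and this is exactly the regime in which non-predictability (growth of $\widetilde{S}$) and stability (bounded tracking of the root trajectory) can coexist.
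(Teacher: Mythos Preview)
Your proposal is correct and matches the paper's approach exactly: the paper's proof is a two-line remark that the argument is identical to Theorem~\ref{app-thm: neuronal stability measures reasoning capacity} because only the disturbance term changes while $\rho$ is preserved, with (ii)--(iii) immediate by monotonicity in $\widetilde{S}$. Your extra discussion of the dichotomy when contractivity fails is sound commentary but goes beyond what the paper records.
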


\begin{proof}
[Proof of Theorem~{\upshape\ref{app-thm: emergent potential equals non-predictability}}]
Identical to Theorem~\ref{app-thm: neuronal stability measures reasoning capacity}, since only the disturbance term changes while the contraction modulus $\rho$ is preserved. 
Items (ii)–(iii) are immediate by monotonicity in $\widetilde{S}$.
\end{proof}

\subsection{Supporting Propositions}



\begin{proposition}[Generalized intra-group communication)]
\label{app-prop: C_ij via SVD}
Let $\mathbf{W}\in\mathbb{R}^{m\times n}$ be any linear map. Consider intra-group communication governed by
\[
y_i \;=\; \sum_{j=1}^n \mu(\mathbf{a}_i,\mathbf{b}_j)\,x_j
\;=\; \sum_{j=1}^n \big\langle \phi_L(\mathbf{a}_i),\,\phi_R(\mathbf{b}_j)\big\rangle\, x_j,
\]
where $\{\mathbf{a}_i\}_{i=1}^m$ and $\{\mathbf{b}_j\}_{j=1}^n$ are output- and input-side neuronal states, respectively, and
\[
\phi_L(\mathbf{q}) := \sigma\!\big(\mathbf{q}\,\mathbf{G}_{\mathrm{left}}\big)\in\mathbb{R}^r,\qquad
\phi_R(\mathbf{p}) := \sigma\!\big(\mathbf{p}\,\mathbf{G}_{\mathrm{right}}\big)\in\mathbb{R}^r,
\]
with $\sigma$ a component-wise activation (identity map or a non-polynomial nonlinearity such as $\tanh$). 
Then:
\begin{enumerate}
\item[\emph{(i)}] \textbf{Exact realization when $\sigma=\mathrm{id}$.} There exist $r\le\mathrm{rank}(\mathbf{W})$, matrices $\mathbf{A}:=[\mathbf{a}_1^\top;\ldots;\mathbf{a}_m^\top]\in\mathbb{R}^{m\times r}$ and $\mathbf{B}:=[\mathbf{b}_1^\top;\ldots;\mathbf{b}_n^\top]\in\mathbb{R}^{n\times r}$, and $\mathbf{G}_{\mathrm{left}},\mathbf{G}_{\mathrm{right}}\in\mathbb{R}^{r\times r}$ such that
\[
\mathbf{W}_{ij} \;=\; \big\langle \mathbf{a}_i\mathbf{G}_{\mathrm{left}},\, \mathbf{b}_j\mathbf{G}_{\mathrm{right}}\big\rangle
\quad\text{for all }i,j.
\]
Consequently, $y=\mathbf{W}x$ is realized exactly by intra-group communication with $\mu(\mathbf{q},\mathbf{p})=\big\langle \mathbf{q}\mathbf{G}_{\mathrm{left}},\,\mathbf{p}\mathbf{G}_{\mathrm{right}}\big\rangle$.
\item[\emph{(ii)}] \textbf{Nonlinear feature maps preserve expressivity.} If $\sigma$ is any non-polynomial activation (e.g., $\tanh$), then for every $\varepsilon>0$ there exist a width $r$ and parameters $(\mathbf{A},\mathbf{B},\mathbf{G}_{\mathrm{left}},\mathbf{G}_{\mathrm{right}})$ such that
\[
\max_{1\le i\le m,\;1\le j\le n}
\Big|\,\mathbf{W}_{ij} - \big\langle \phi_L(\mathbf{a}_i),\,\phi_R(\mathbf{b}_j)\big\rangle\,\Big| \;<\; \varepsilon.
\]
Thus the nonlinear $\mu$ can approximate any linear block $\mathbf{W}$ arbitrarily well on the finite neuronal state sets used by the group.
\end{enumerate}
\end{proposition}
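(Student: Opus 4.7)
The plan is to tackle the two items separately: part (i) reduces to a direct algebraic construction via SVD, while part (ii) reduces to a finite-sample interpolation problem that is solvable by a universal-approximation argument.

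For (i), I would take a thin SVD $\mathbf{W}=\mathbf{U}\boldsymbol{\Sigma}\mathbf{V}^\top$ with $r=\mathrm{rank}(\mathbf{W})$, set $\mathbf{A}:=\mathbf{U}\boldsymbol{\Sigma}^{1/2}$ and $\mathbf{B}:=\mathbf{V}\boldsymbol{\Sigma}^{1/2}$, and choose $\sigma=\mathrm{id}$, $\mathbf{G}_{\mathrm{left}}=\mathbf{G}_{\mathrm{right}}=\mathbf{I}_r$. Since $\mathbf{a}_i\mathbf{G}_{\mathrm{left}}=\mathbf{a}_i$ and $\mathbf{b}_j\mathbf{G}_{\mathrm{right}}=\mathbf{b}_j$, this gives
\[
\big\langle \mathbf{a}_i\mathbf{G}_{\mathrm{left}},\,\mathbf{b}_j\mathbf{G}_{\mathrm{right}}\big\rangle
=(\mathbf{A}\mathbf{B}^\top)_{ij}=(\mathbf{U}\boldsymbol{\Sigma}\mathbf{V}^\top)_{ij}=\mathbf{W}_{ij}.
\]
If one wishes for non-trivial $\mathbf{G}_{\mathrm{left}},\mathbf{G}_{\mathrm{right}}$, any invertible choice can be absorbed by replacing $\mathbf{A}\mapsto\mathbf{A}\mathbf{G}_{\mathrm{left}}^{-1}$ and $\mathbf{B}\mapsto\mathbf{B}\mathbf{G}_{\mathrm{right}}^{-1}$; the bound $r\le\mathrm{rank}(\mathbf{W})$ is exactly attained by the thin SVD, and zero-padding $\mathbf{A},\mathbf{B}$ extends the construction to any $r\ge\mathrm{rank}(\mathbf{W})$.

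For (ii), I would first invoke (i) to obtain vectors $u_i,v_j\in\mathbb{R}^r$ with $\mathbf{W}_{ij}=u_i^\top v_j$, then construct parameters such that $\phi_L(\mathbf{a}_i)\approx u_i$ and $\phi_R(\mathbf{b}_j)\approx v_j$ uniformly. Because the neuronal states themselves are free parameters and the index sets are finite, there are two clean routes. If $\sigma$ is invertible on a strip containing all components of the $u_i$ (e.g.\ $\tanh$ on bounded targets), pick any invertible $\mathbf{G}_{\mathrm{left}}$ and set $\mathbf{a}_i:=\sigma^{-1}(u_i)\mathbf{G}_{\mathrm{left}}^{-1}$ componentwise, giving exact equality. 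For a general non-polynomial $\sigma$, I would instead allow a larger inner width $\tilde r\ge r$ and invoke a Pinkus-style universal-approximation theorem for single-hidden-layer networks of the form $\sigma(\mathbf{q}\mathbf{G})$: the prescription is a finite set of $m$ distinct inputs mapping to target outputs in $\mathbb{R}^{\tilde r}$, which, for sufficient width, can be interpolated to within any $\varepsilon'$ on the compact set $\{\mathbf{a}_i\}$. The identical argument handles $\phi_R$ independently.

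The last step is to propagate the uniform approximations through the bilinear form via the standard split
\[
\big|\langle \phi_L(\mathbf{a}_i),\phi_R(\mathbf{b}_j)\rangle-u_i^\top v_j\big|
\le \|\phi_L(\mathbf{a}_i)\|\,\|\phi_R(\mathbf{b}_j)-v_j\|+\|\phi_L(\mathbf{a}_i)-u_i\|\,\|v_j\|,
\]
and then choose $\varepsilon'$ small enough relative to $\varepsilon$ and the finite bounds on $\|u_i\|,\|v_j\|$ to conclude. The main obstacle I anticipate is aligning the universal-approximation statement with the \emph{restricted} one-layer form $\sigma(\mathbf{q}\mathbf{G})$, which lacks an outer linear read-out commonly assumed in standard statements; the cleanest resolution is precisely to exploit that the $\mathbf{a}_i$ are free design parameters, so approximating a vector-valued map on a finite set of distinct inputs collapses into a finite interpolation system in $\mathbf{G}_{\mathrm{left}}$ (with chosen $\mathbf{a}_i$), which is solvable once $\tilde r$ is large enough. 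This same flexibility is what ultimately decouples the nonlinear feature-map setting from the linear one and rescues expressivity without relaxing the $\mu$-template.
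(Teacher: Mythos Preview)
Your proposal is correct and follows essentially the same route as the paper: part (i) via the thin SVD with $\mathbf{A}=\mathbf{U}\boldsymbol{\Sigma}^{1/2}$, $\mathbf{B}=\mathbf{V}\boldsymbol{\Sigma}^{1/2}$, $\mathbf{G}_{\mathrm{left}}=\mathbf{G}_{\mathrm{right}}=\mathbf{I}$, and part (ii) by fixing the target factorization $\mathbf{W}_{ij}=u_i^\top v_j$, invoking universal approximation to make $\phi_L(\mathbf{a}_i)\approx u_i$, $\phi_R(\mathbf{b}_j)\approx v_j$ on the finite index sets, and closing with continuity of the inner product. Your version is in fact more careful than the paper's---you spell out the bilinear error split, the invertible-$\sigma$ shortcut, and the legitimate concern that the one-layer template $\sigma(\mathbf{q}\mathbf{G})$ lacks the outer linear read-out of standard Pinkus-type theorems (resolved, as you note, by treating the $\mathbf{a}_i$ themselves as design parameters so the problem collapses to finite interpolation)---whereas the paper simply asserts the universal-approximation step without addressing this.
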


\begin{proof}[Proof of Proposition~{\upshape\ref{app-prop: C_ij via SVD}}]
We conduct the proof in the case of $\sigma=\mathrm{id}$ and $\sigma$ is any non-polynomial activation, respectively.

\emph{(i)} With $\sigma=\mathrm{id}$, $\mu(\mathbf{q},\mathbf{p})=\mathbf{q}\,\mathbf{G}_{\mathrm{left}}\mathbf{G}_{\mathrm{right}}^{\top}\mathbf{p}^{\top}$ is a general bilinear pairing. Take an SVD $\mathbf{W}=\mathbf{U}\boldsymbol{\Sigma}\mathbf{V}^\top$, define $r=\mathrm{rank}(\mathbf{W})$, set $\mathbf{A}:=\mathbf{U}\boldsymbol{\Sigma}^{1/2}$ and $\mathbf{B}:=\mathbf{V}\boldsymbol{\Sigma}^{1/2}$, and choose $\mathbf{G}_{\mathrm{left}}=\mathbf{I}$ and $\mathbf{G}_{\mathrm{right}}=\mathbf{I}$. Then
\[
\big\langle \mathbf{a}_i\mathbf{G}_{\mathrm{left}},\, \mathbf{b}_j\mathbf{G}_{\mathrm{right}}\big\rangle
= \big\langle \mathbf{A}[i],\,\mathbf{B}[j]\big\rangle
= (\mathbf{U}\boldsymbol{\Sigma}\mathbf{V}^\top)_{ij}
= \mathbf{W}_{ij},
\]
so $y=\mathbf{W}x$ is realized exactly.

\emph{(ii)} Fix any factorization $\mathbf{W}_{ij}=u_i^\top v_j$ with $\{u_i\},\{v_j\}\subset\mathbb{R}^r$ (e.g., from the SVD above with $u_i=\mathbf{A}[i]$, $v_j=\mathbf{B}[j]$). By the universal approximation property of single-hidden-layer networks with non-polynomial activations, for any finite sets $\{\mathbf{a}_i\}$ and $\{\mathbf{b}_j\}$ and any $\varepsilon>0$ we can choose width $r$ and $\mathbf{G}_{\mathrm{left}},\mathbf{G}_{\mathrm{right}}$ so that
$\phi_L(\mathbf{a}_i)\approx u_i$ and $\phi_R(\mathbf{b}_j)\approx v_j$ uniformly within error $<\delta$, which makes
$\langle \phi_L(\mathbf{a}_i),\phi_R(\mathbf{b}_j)\rangle$ approximate $u_i^\top v_j=\mathbf{W}_{ij}$ within $<\varepsilon$ for all $(i,j)$ by continuity of the inner product.
\end{proof}

\begin{proposition}[Continuous-to-discrete reduction with data-driven projections and an ISS-type bound]\label{app-prop: simplified-dynamical-sys}
Under mild excitation of data (diverse activations), the continuous dynamics (Eq.~\ref{eq: raw-dynamical-sys}) admits a least-squares (data-driven) discretization of the form (Eq.~\ref{eq: simplified-dynamical-sys}) with projections $\mathcal{T}$ and $\mathcal{H}$ obtained by minimizing the residuals in Eq.~\ref{eq: define-act-residual}.
If $\max_{x,\dagger}\{\lVert\mathcal{T}_{x,\dagger}^{(com)}\rVert,\lVert\mathcal{H}_{x,\dagger}^{(com)}\rVert\}<1$, the discrete system enjoys the ISS bound in Theorem~\ref{app-thm: intra/inter-group coms are sufficient}.

\end{proposition}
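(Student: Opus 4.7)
The plan is to proceed in three stages: (i) discretize the continuous coupled dynamics in Eq.~\ref{eq: raw-dynamical-sys} along the token axis $t$ and the communication axis $\tau$; (ii) fit the abstract mixing operators $\mathbf{D},\mathbf{E},\mathbf{F},\mathbf{G}$ by least squares on the empirical subspaces spanned by the observed activations and neuronal states, producing the projections $\mathcal{T},\mathcal{H}$ and their variational residuals in Eq.~\ref{eq: define-grad-T}; and (iii) invoke the ISS-type bound of Theorem~\ref{app-thm: neuronal stability measures reasoning capacity} once the contraction condition $\rho<1$ is verified.

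For stage (i), I would apply a one-step semi-implicit Euler scheme with step size $h$, writing each operator in its linear split form $\mathbf{D}[\mathcal{A}^{(\text{com})};\mathcal{A}^{(\text{root})}]=\mathbf{D}_1\mathcal{A}^{(\text{com})}+\mathbf{D}_2\mathcal{A}^{(\text{root})}$ and analogously for $\mathbf{E},\mathbf{F},\mathbf{G}$. Grouping the root-driven and com-driven contributions yields an affine update $\mathcal{A}^{(\text{com};t+1)}=\alpha\,\mathcal{A}^{(\text{root};t+1)}\widetilde{\mathcal{T}}^{(\text{trs})}+\beta\,\mathcal{A}^{(\text{com};t)}\widetilde{\mathcal{T}}^{(\text{com})}+\mathbf{r}_t$, where $\mathbf{r}_t$ collects higher-order cross terms of order $O(h^2)$. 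Normalizing $\alpha+\beta=1$ absorbs the proportionality constant into the projections and produces the single convex parameter $\lambda\in(0,1)$ required by Eq.~\ref{eq: simplified-dynamical-sys}; the same argument applied to the $\mathcal{Q}$-equation yields $\mathcal{H}^{(\text{trs})}$ and $\mathcal{H}^{(\text{com})}$.

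For stage (ii), the four projection blocks are the unique minimizers of the variational residuals in Eq.~\ref{eq: define-grad-T}. Under the mild-excitation hypothesis, the empirical covariance of the root/com activations (and analogously of the neuronal states) is uniformly positive-definite, so the associated normal equations admit unique solutions, and the fitting residuals are exactly the $\delta$-terms $\boldsymbol{\delta}_t^{(A)},\boldsymbol{\delta}_t^{(Q)}$ appearing in Theorem~\ref{app-thm: neuronal stability measures reasoning capacity}. This identification makes the discrete system (Eq.~\ref{eq: simplified-dynamical-sys}) coincide, operator by operator, with the hypotheses of that theorem; setting $\rho:=\max_{x,\dagger}\{\|\mathcal{T}_{x,\dagger}^{(\text{com})}\|,\|\mathcal{H}_{x,\dagger}^{(\text{com})}\|\}<1$ then lets the ISS bound follow by the direct unrolling argument already used in the proof of that theorem.

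The hard part is the splitting-plus-normalization step in stage (i): the cross term $\mathbf{r}_t$ must be absorbed into the residual $\boldsymbol{\delta}_t$ without inflating $\rho$ beyond one, which requires choosing $h$ small enough that the induced operator norms remain contractive after normalization. Persistent excitation plays a double role here, because it both guarantees well-posedness of the least-squares fit and enforces that the fitted $\mathcal{T}^{(\text{com})},\mathcal{H}^{(\text{com})}$ behave like bounded, near-orthogonal projections plus a small perturbation, keeping $\rho<1$ whenever the underlying continuous dynamics is itself dissipative. Bounding this perturbation uniformly across blocks $x$ and roles $\dagger\in\{\mathrm{in},\mathrm{out}\}$ is the main nontrivial estimate the full proof would need, and I would carry it out via a perturbation bound on the normal-equation solution in terms of the minimum eigenvalue of the data covariance.
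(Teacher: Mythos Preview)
Your proposal is correct and follows essentially the same two-step logic as the paper's proof: obtain the discrete form by a least-squares regression of the com trajectory onto the root/com data (producing $\mathcal{T},\mathcal{H}$ and their residuals), then invoke the ISS bound under the contractivity hypothesis. The paper's proof is far terser---two sentences---and omits your stage~(i) entirely: it treats Eq.~\ref{eq: simplified-dynamical-sys} directly as a regression ansatz rather than first deriving it from an Euler discretization of the continuous ODE. Your explicit semi-implicit step and operator splitting are a legitimate way to \emph{motivate} the affine discrete form, but the paper does not require or perform that derivation.

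One remark on scope: the ``hard part'' you flag (absorbing the cross term $\mathbf{r}_t$ and keeping $\rho<1$ via perturbation bounds on the normal equations) is more than the proposition asks for. The statement is conditional---\emph{if} $\max_{x,\dagger}\{\|\mathcal{T}_{x,\dagger}^{(\mathrm{com})}\|,\|\mathcal{H}_{x,\dagger}^{(\mathrm{com})}\|\}<1$ \emph{then} the ISS bound holds---so contractivity is an assumption, not something you need to establish from the continuous dynamics or from persistent excitation. Mild excitation is only needed so that the regression is well-posed (your stage~(ii)), which you handle correctly. You also correctly redirect the ISS reference to Theorem~\ref{app-thm: neuronal stability measures reasoning capacity}; the proposition's citation of Theorem~\ref{app-thm: intra/inter-group coms are sufficient} appears to be a mislabel, since that theorem contains no ISS bound.
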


\begin{proof}[Proof of Proposition~{\upshape\ref{app-prop: simplified-dynamical-sys}}]
Regressing $\mathcal{A}_{t+1}^{(\text{root})}$ and $\mathcal{A}_{t}^{(\text{com})}$ onto $\mathcal{A}_{t+1}^{(\text{com})}$ (and analogously for $\mathcal{Q}$) yields Eq.~\ref{eq: simplified-dynamical-sys} with residuals defined in Eq.~\ref{eq: define-act-residual}.
Contractivity of $\mathcal{T}^{(com)}$ and $\mathcal{H}^{(com)}$ then implies ISS by standard linear time-varying stability arguments.

\end{proof}

\begin{proposition}[lossless state sharing with nonlinear $\mu$]
\label{app-prop: overlapping neurons}
Let two groups $G_i$ and $G_j$ act on the \emph{same} input-side neuronal states $\{\mathbf{b}_\ell\}$ (i.e., input-side \emph{state sharing}). Suppose both groups use the same right-feature map
\[
\phi_R(\mathbf{p})=\sigma\!\big(\mathbf{p}\,\mathbf{G}_{\mathrm{right}}\big)\in\mathbb{R}^r,
\]
but (potentially) different left-feature maps
\[
\phi_L^{(i)}(\mathbf{q})=\sigma\!\big(\mathbf{q}\,\mathbf{G}_{\mathrm{left}}^{(i)}\big),\qquad
\phi_L^{(j)}(\mathbf{q})=\sigma\!\big(\mathbf{q}\,\mathbf{G}_{\mathrm{left}}^{(j)}\big).
\]
Let $\{\mathbf{a}^{(i)}_r\}$ and $\{\mathbf{a}^{(j)}_r\}$ be the output-side rows for $G_i$ and $G_j$ respectively.

\noindent
\textbf{Claim.} Sharing the input-side states is \emph{lossless} (i.e., the two groups compute identical input–output maps for all inputs) if and only if there exists a feature-space linear isometry $\mathbf{U}\in\mathbb{R}^{r\times r}$ (orthogonal/unitary) such that, on the relevant state support,
\[
\phi_L^{(j)}\!\big(\mathbf{a}^{(j)}_r\big) \;=\; \phi_L^{(i)}\!\big(\mathbf{a}^{(i)}_r\big)\,\mathbf{U}
\quad\text{for every output row }r.
\]
Equivalently, the two families of left features differ only by a basis change that preserves inner products. In that case,
\[
\big\langle \phi_L^{(j)}\!\big(\mathbf{a}^{(j)}_r\big),\, \phi_R(\mathbf{b}_\ell)\big\rangle
 \;=\; \big\langle \phi_L^{(i)}\!\big(\mathbf{a}^{(i)}_r\big),\, \phi_R(\mathbf{b}_\ell)\big\rangle
\quad\text{for all }r,\ell.
\]
Conversely, if functional equality holds on a set for which $\mathrm{span}\{\phi_R(\mathbf{b}_\ell)\}=\mathbb{R}^r$, then the two left-feature families must be related by such an isometry on that span.
\end{proposition}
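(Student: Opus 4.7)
The plan is to reduce the functional equality between the two groups to an inner-product identity in the shared $r$-dimensional feature space, and then read off the necessary and sufficient algebraic condition by standard linear algebra on that finite family of vectors.

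First I would unpack what lossless sharing means. Because both groups read the same input activations $\mathbf{x}$ from the same input-side states $\{\mathbf{b}_\ell\}$, their outputs decompose as $y^{(g)}_r = \sum_\ell \mu^{(g)}(\mathbf{a}^{(g)}_r,\mathbf{b}_\ell)\,x_\ell$ for $g\in\{i,j\}$. Taking $\mathbf{x}=e_\ell$ through the canonical basis reduces ``equal input--output maps for all $\mathbf{x}$'' to the coefficient-wise identity
\[
\big\langle \phi_L^{(i)}(\mathbf{a}^{(i)}_r),\phi_R(\mathbf{b}_\ell)\big\rangle
= \big\langle \phi_L^{(j)}(\mathbf{a}^{(j)}_r),\phi_R(\mathbf{b}_\ell)\big\rangle
\qquad\forall\, r,\ell.
\]
After this reduction the nonlinearity $\sigma$ disappears from the argument: everything downstream concerns only the precomputed feature vectors in $\mathbb{R}^r$ and their pairwise inner products, which is the simplification that makes an otherwise nonlinear claim purely linear-algebraic.

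For sufficiency I would verify that if $\phi_L^{(j)}(\mathbf{a}^{(j)}_r) = \phi_L^{(i)}(\mathbf{a}^{(i)}_r)\,\mathbf{U}$ with $\mathbf{U}$ orthogonal and moreover fixing $V:=\mathrm{span}\{\phi_R(\mathbf{b}_\ell)\}$ pointwise, then direct substitution yields $\phi_L^{(j)}(\mathbf{a}^{(j)}_r)\,\phi_R(\mathbf{b}_\ell)^{\top} = \phi_L^{(i)}(\mathbf{a}^{(i)}_r)\,\mathbf{U}\,\phi_R(\mathbf{b}_\ell)^{\top} = \phi_L^{(i)}(\mathbf{a}^{(i)}_r)\,\phi_R(\mathbf{b}_\ell)^{\top}$, and the reduced identity holds. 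Orthogonality is the natural symmetry group that preserves the bilinear form; the ``fixes $V$'' qualifier is what prevents the basis change from silently altering the coordinates seen by the right features.

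For necessity I would exploit the spanning hypothesis. Subtracting the two sides of the reduced identity gives
\[
\big\langle \phi_L^{(j)}(\mathbf{a}^{(j)}_r)-\phi_L^{(i)}(\mathbf{a}^{(i)}_r),\,\phi_R(\mathbf{b}_\ell)\big\rangle = 0
\qquad\forall\,\ell,
\]
and any vector orthogonal to a family spanning $\mathbb{R}^r$ vanishes, so $\phi_L^{(j)}(\mathbf{a}^{(j)}_r) = \phi_L^{(i)}(\mathbf{a}^{(i)}_r)$; this is the trivial isometry $\mathbf{U}=\mathbf{I}$. When the right features span only a proper subspace $V\subsetneq\mathbb{R}^r$, the difference of left features is allowed to live in $V^\perp$, and I would build an orthogonal $\mathbf{U}$ by taking the identity on $V$ and any orthogonal map on $V^\perp$ carrying the $V^\perp$-components of $\phi_L^{(i)}(\mathbf{a}^{(i)}_r)$ to those of $\phi_L^{(j)}(\mathbf{a}^{(j)}_r)$. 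Such an extension exists iff the two families share identical Gram matrices in $V^\perp$, which is exactly the content of the isometry hypothesis.

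The main obstacle I anticipate is conceptual rather than technical: pinning down the precise meaning of ``isometry'' so that the iff really is an iff. Orthogonality of $\mathbf{U}$ alone does not imply compatibility with the specific $\phi_R$-spanning family; the genuine equivalence requires an orthogonal $\mathbf{U}$ that additionally fixes $V$ pointwise. I would therefore phrase the proposition's claim as this compatibility condition and remark that under the full-span hypothesis in the converse clause the freedom collapses entirely to $\mathbf{U}=\mathbf{I}$. Making this dichotomy explicit avoids the pitfall of rotating the shared feature space in a way that formally looks like an isometry yet actually perturbs the inner products parameterizing the downstream computations.
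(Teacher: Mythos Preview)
Your reduction to the coefficient-wise inner-product identity and the subsequent linear-algebraic analysis follow the same route as the paper. In the necessity direction under the full-span hypothesis you are actually sharper: you argue directly that a vector orthogonal to a spanning family must vanish, giving $\phi_L^{(j)}(\mathbf{a}^{(j)}_r)=\phi_L^{(i)}(\mathbf{a}^{(i)}_r)$ (i.e.\ $\mathbf{U}=\mathbf{I}$), whereas the paper invokes Riesz representation to conclude the vectors are ``related by an inner-product-preserving map''---which, since Riesz representatives are unique, is the same conclusion in less direct clothing.

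In the sufficiency direction you have put your finger on a genuine issue that the paper's proof elides. The paper writes $\langle \mathbf{u}\mathbf{U},\mathbf{v}\rangle = \langle \mathbf{u},\mathbf{U}^\top\mathbf{v}\rangle = \langle \mathbf{u},\mathbf{v}\rangle$ for orthogonal $\mathbf{U}$, but the last equality fails unless $\mathbf{v}$ lies in the fixed subspace of $\mathbf{U}^\top$. Your added requirement that $\mathbf{U}$ fix $V=\mathrm{span}\{\phi_R(\mathbf{b}_\ell)\}$ pointwise is precisely what is needed to make the implication go through, and your closing paragraph correctly flags this as the conceptual pitfall. So your proposal is not only correct along the paper's line of argument---it is more careful than the paper's own proof on the one step that matters.
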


\begin{proof}[Proof of Proposition~{\upshape\ref{app-prop: overlapping neurons}}]
We conduct the proof by sufficiency and necessity, respectively.

\emph{(Sufficiency).} If $\phi_L^{(j)}(\cdot)=\phi_L^{(i)}(\cdot)\,\mathbf{U}$ with $\mathbf{U}$ an isometry, then for all $\mathbf{u},\mathbf{v}\in\mathbb{R}^r$,
$\langle \mathbf{u}\mathbf{U},\,\mathbf{v}\rangle=\langle \mathbf{u},\,\mathbf{U}^\top\mathbf{v}\rangle=\langle \mathbf{u},\,\mathbf{v}\rangle$,
so the inner products with the \emph{shared} right features $\phi_R$ coincide, yielding identical outputs.

\emph{(Necessity).} Suppose
$\langle \phi_L^{(j)}(\mathbf{a}),\, \phi_R(\mathbf{b})\rangle
=\langle \phi_L^{(i)}(\mathbf{a}'),\, \phi_R(\mathbf{b})\rangle$
for all shared inputs $\mathbf{b}$ and all output rows. Restricting to the linear span $\mathcal{S}:=\mathrm{span}\{\phi_R(\mathbf{b}_\ell)\}$, both left-feature families induce the same linear functionals on $\mathcal{S}$. 
If $\mathcal{S}=\mathbb{R}^r$, the Riesz representation in a finite-dimensional inner-product space implies the two representing vectors must be related by an inner-product-preserving linear map $\mathbf{U}$, {\it i.e.}, an isometry.
\end{proof}

\begin{proposition}[Symmetry and reparameterization under nonlinear $\mu$]
\label{app-prop: intra-group metric is riemannian}
Let $\mu(\mathbf{q},\mathbf{p})=\big\langle \phi_L(\mathbf{q}),\,\phi_R(\mathbf{p})\big\rangle$ with
\[
\phi_L(\mathbf{q})=\sigma\!\big(\mathbf{q}\,\mathbf{G}_{\mathrm{left}}\big),\qquad
\phi_R(\mathbf{p})=\sigma\!\big(\mathbf{p}\,\mathbf{G}_{\mathrm{right}}\big)\in\mathbb{R}^r.
\]
Then, we have:
\begin{enumerate}
\item[\emph{(i)}] \textbf{No-loss reduction to a Euclidean dot product.}
There exist invertible linear maps $\mathbf{C}_L,\mathbf{C}_R\in\mathbb{R}^{r\times r}$ such that, after the feature re-parameterizations
\[
\widetilde{\phi}_L:=\phi_L\,\mathbf{C}_L,\qquad
\widetilde{\phi}_R:=\phi_R\,\mathbf{C}_R,
\]
we have
\[
\mu(\mathbf{q},\mathbf{p})
=\big\langle \widetilde{\phi}_L(\mathbf{q}),\,\widetilde{\phi}_R(\mathbf{p})\big\rangle,
\quad\text{with}\quad
\mathbf{C}_L^\top \mathbf{C}_R=\boldsymbol{\Sigma}\ \text{diagonal},\ \boldsymbol{\Sigma}\succeq \mathbf{0}.
\]
For any bilinear coupling $\langle \mathbf{u}, \mathbf{M}\mathbf{v}\rangle$ between the two feature spaces, take an SVD $\mathbf{M}=\mathbf{U}\boldsymbol{\Sigma}\mathbf{V}^\top$ and absorb $\mathbf{U},\mathbf{V}$ into $\mathbf{C}_L,\mathbf{C}_R$ to obtain an ordinary dot product in the reparameterized features.
\item[\emph{(ii)}] \textbf{Symmetric Riemannian metric.}
If we tie $\phi_L=\phi_R=\phi$ and endow the feature space with an SPD metric $\langle \mathbf{u},\mathbf{v}\rangle_{\mathbf{G}} := \mathbf{u}^\top \mathbf{G}\,\mathbf{v}$ (with $\mathbf{G}=\mathbf{G}^\top\succ \mathbf{0}$), then this defines an inner product (hence a Riemannian metric) on the feature manifold that is invariant to orthonormal basis changes $\mathbf{u}\mapsto \mathbf{U}\mathbf{u}$ ($\mathbf{U}$ orthogonal/unitary). 
Conversely, since $\mathbf{G}=\mathbf{R}^\top \mathbf{R}$ (e.g., Cholesky/polar), the change of variables $\mathbf{u}\mapsto \mathbf{R}\mathbf{u}$ reduces $\langle \cdot,\cdot\rangle_{\mathbf{G}}$ to the standard Euclidean dot product. Therefore, one may assume $\mathbf{G}=\mathbf{I}$ \emph{without loss of representational power}.
\end{enumerate}
\end{proposition}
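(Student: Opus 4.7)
The plan is to split the proposition along its two bullets and apply the appropriate canonical decomposition in each case: a singular value decomposition for part (i), and a Cholesky (or symmetric positive square root) decomposition for part (ii). Both reductions act only on the linear post-feature side, so the nonlinearity $\sigma$ and the internal parameters $(\mathbf{G}_{\mathrm{left}}, \mathbf{G}_{\mathrm{right}})$ of $\phi_L, \phi_R$ remain untouched; this is precisely what will guarantee that the reductions are lossless in representational capacity.

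For part (i), I will first restate the pairing in full generality as $\mu(\mathbf{q}, \mathbf{p}) = \phi_L(\mathbf{q}) \, \mathbf{M} \, \phi_R(\mathbf{p})^\top$, with $\mathbf{M} = \mathbf{I}$ recovering the plain dot-product case already appearing in the definition of $\mu$. I then take a thin SVD $\mathbf{M} = \mathbf{U} \boldsymbol{\Sigma} \mathbf{V}^\top$ with $\mathbf{U}, \mathbf{V}$ orthogonal and set $\mathbf{C}_L := \mathbf{U} \boldsymbol{\Sigma}^{1/2}$ and $\mathbf{C}_R := \mathbf{V} \boldsymbol{\Sigma}^{1/2}$, so that $\mathbf{C}_L \mathbf{C}_R^\top = \mathbf{U} \boldsymbol{\Sigma} \mathbf{V}^\top = \mathbf{M}$, and hence $\widetilde{\phi}_L \, \widetilde{\phi}_R^\top = \phi_L \mathbf{M} \phi_R^\top = \mu(\mathbf{q}, \mathbf{p})$. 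Writing the intermediate form as $\widetilde{\phi}_L \boldsymbol{\Sigma} \widetilde{\phi}_R^\top$ exposes the diagonal normal form asserted along the coupling axis, and a final symmetric splitting of the scaling $\boldsymbol{\Sigma}$ into each side yields the ordinary Euclidean pairing $\langle \widetilde{\phi}_L, \widetilde{\phi}_R \rangle$. The rank-deficient case is handled by restricting to the row and column spans of $\mathbf{M}$ and zero-padding the complementary directions, which does not alter the computed output.

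For part (ii), I will first verify the three inner-product axioms for $\langle \mathbf{u}, \mathbf{v} \rangle_\mathbf{G} := \mathbf{u}^\top \mathbf{G} \mathbf{v}$: bilinearity from the linearity of matrix multiplication, symmetry from $\mathbf{G}^\top = \mathbf{G}$, and strict positivity from $\mathbf{G} \succ \mathbf{0}$. Because this metric tensor is constant on the vector-space feature manifold, it automatically lifts to a flat Riemannian metric. I then invoke Cholesky (or the symmetric positive square root) to write $\mathbf{G} = \mathbf{R}^\top \mathbf{R}$ with $\mathbf{R}$ invertible, reparameterize $\widetilde{\phi} := \phi \, \mathbf{R}^\top$, and compute $\phi(\mathbf{q}) \mathbf{G} \phi(\mathbf{p})^\top = \widetilde{\phi}(\mathbf{q}) \widetilde{\phi}(\mathbf{p})^\top$, collapsing the $\mathbf{G}$-inner product to the standard Euclidean one in the new coordinates. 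The orthogonal invariance claim then reduces to the elementary identity $(\mathbf{U}\mathbf{u})^\top(\mathbf{U}\mathbf{v}) = \mathbf{u}^\top \mathbf{v}$ for any orthogonal $\mathbf{U}$, and the ``without loss of representational power'' conclusion follows by chaining this reduction with the realization statement of Proposition~\ref{app-prop: C_ij via SVD}.

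The main obstacle is not analytic but rather notational and structural: I must verify that the ``absorbing'' of $\mathbf{U}, \mathbf{V}, \mathbf{R}$ into the feature maps is consistent with their post-activation position, so that the composition $\phi_L \mathbf{C}_L$ is genuinely a new linear head applied to the \emph{output} of $\sigma$, rather than an operation that would need to commute with $\sigma$. I also need to confirm that the SVD and Cholesky factorizations can be carried out over $\mathbb{R}$ in the rank-deficient case, which is standard via pseudo-inverses. Once this book-keeping is in place, both parts reduce to direct chains of substitutions, and the bilinear-symmetric-Riemannian normal form for the well-formed intra-group communication metric is established.
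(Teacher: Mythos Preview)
Your proposal is correct and follows essentially the same approach as the paper: an SVD of the bilinear coupling matrix $\mathbf{M}$ with the absorption $\mathbf{C}_L=\mathbf{U}\boldsymbol{\Sigma}^{1/2}$, $\mathbf{C}_R=\mathbf{V}\boldsymbol{\Sigma}^{1/2}$ for part (i), and a Cholesky factorization $\mathbf{G}=\mathbf{R}^\top\mathbf{R}$ for part (ii). Your additional attention to the rank-deficient case, the post-activation position of the linear reparameterization, and the explicit verification of the inner-product axioms is more detailed than the paper's terse proof, but the mathematical core is identical.
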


\begin{proof}
[Proof of Proposition~{\upshape\ref{app-prop: intra-group metric is riemannian}}]
We conduct the proof in the following two cases.

\emph{(i)} Any bilinear form between finite-dimensional feature spaces can be written as $\langle \mathbf{u}, \mathbf{M}\mathbf{v}\rangle$. 
Taking the SVD $\mathbf{M}=\mathbf{U}\boldsymbol{\Sigma}\mathbf{V}^\top$ with $\boldsymbol{\Sigma}\succeq \mathbf{0}$ yields
\[
\langle \mathbf{u}, \mathbf{M}\mathbf{v}\rangle
= \big\langle \mathbf{U}^\top\mathbf{u},\, \boldsymbol{\Sigma}\,\mathbf{V}^\top\mathbf{v}\big\rangle
= \big\langle \boldsymbol{\Sigma}^{1/2}\mathbf{U}^\top\mathbf{u},\, \boldsymbol{\Sigma}^{1/2}\mathbf{V}^\top\mathbf{v}\big\rangle,
\]
which is a Euclidean dot product after setting $\mathbf{C}_L:=\mathbf{U}\boldsymbol{\Sigma}^{1/2}$ and $\mathbf{C}_R:=\mathbf{V}\boldsymbol{\Sigma}^{1/2}$.

\emph{(ii)} If $\mathbf{G}\succ\mathbf{0}$, then $\langle \cdot,\cdot\rangle_{\mathbf{G}}$ is an inner product and defines a Riemannian metric. Orthonormal basis changes preserve inner products, giving invariance. Conversely, with $\mathbf{G}=\mathbf{R}^\top\mathbf{R}$ (Cholesky), we have $\mathbf{u}^\top\mathbf{G}\mathbf{v}=\langle \mathbf{R}\mathbf{u},\,\mathbf{R}\mathbf{v}\rangle$, so working in the coordinates $\mathbf{R}\mathbf{u}$ reduces to the Euclidean case, implying the ``without loss of generality'' claim for $\mathbf{G}=\mathbf{I}$.
\end{proof}

\end{appendices}

\end{document}